\newtheoremstyle{exampstyle}
{1em} 
{1em} 
{} 
{} 
{\bfseries} 
{.} 
{1.5em} 
{} 
\theoremstyle{exampstyle}
\newtheorem{definition}{Definition}[section]
\newtheorem{lemma}{Lemma}[section]
\newtheorem{corollary}{Corollary}[section]
\newtheorem{proposition}{Proposition}[section]
\newtheorem{assumption}{Assumption}[section]
\newcommand{\cQ}{\mathcal{Q}}
\newcommand{\ab}{\mathtt a^*}
\newcommand{\anv}{\mathtt{ a^*_{NV}}}
\newcommand{\alc}{\mathtt{ a^*_{LC}}}
\newcommand{\nX}{\tilde{X}_n}
\newcommand{\scKL}{\text{\sc KL}}
\newcommand{\argmin}{\text{argmin}}
\newcommand{\argmax}{\text{argmax}}
\newcommand{\qnv}{q^*(\theta|\nX)}
\newcommand{\qlc}{q^*_{a}(\theta|\nX)}
\newcommand{\qlcb}{q^*_{\bar a}(\theta|\nX)}
\title{Asymptotic Consistency of Loss-Calibrated Variational Bayes}
\author{Prateek Jaiswal{$^\star$}, Harsha Honnappa{$^\star$} and Vinayak A. Rao{$^\dag$} }
\date{$^\star$\{jaiswalp,honnappa\}@purdue.edu School of Industrial Engineering, Purdue University\\$^\dag$\{varao@purdue.edu\} Department of Statistics, Purdue University.}
\begin{document}

\maketitle


\begin{abstract}
    This paper establishes the asymptotic consistency of the {\it loss-calibrated variational Bayes} (LCVB) method. LCVB was proposed in~\cite{LaSiGh2011} as a method for approximately computing Bayesian posteriors in a `loss aware' manner. This methodology is also highly relevant in general data-driven decision-making contexts. Here, we not only establish the asymptotic consistency of the calibrated approximate posterior, but also the asymptotic consistency of decision rules. We also establish the asymptotic consistency of decision rules obtained from a `naive' variational Bayesian procedure.
    
\end{abstract}

\section{Introduction}
In this paper we establish the asymptotic consistency of {\it loss-calibrated variational Bayes} (LCVB). Consider a loss function $G(a,\theta) : (a,\theta) \mapsto G(a,\theta) \in \mathbb R$, where $a \in \sA \subset \bbR^s$ is a decision/design variable and $\theta \in \Theta \subset \mathbb R^d$ is a model parameter space. Given a set of observations $\nX = \{\xi_1,\ldots,\xi_n\}$ drawn from a distribution with unknown parameter $\theta_0$, $p(\nX|\theta_0)$, our goal is to compute the Bayes optimal decision rule
\begin{align}
  \mathtt a^*(\nX) := \arg\min_{a \in \sA} \bbE_{\pi}[G(a,\theta)] = \int_{\Theta} G(a,\theta) \pi(\theta|\nX) d\theta, \label{eq:opt_act}
\end{align}
where $\pi(\theta|\nX)$ is the posterior distribution. 
%
%
The latter results when a Bayesian decision-maker places a {\em prior} distribution $\pi(\theta)$ 
over the parameter space $\Theta$, capturing {\em a priori} information about 
$\theta$ such as location or spread. 
Given $\nX$, the prior and likelihood $p(\nX|\theta)$ together 
define a {\em posterior} distribution $\pi(\theta|\nX) \propto 
p(\nX|\theta)\pi(\theta) =: p(\theta,\nX)$, the conditional distribution over $\theta$ given 
observations. 
The posterior distribution represents uncertainty over the unknown 
parameter $\theta$, and contains all information required for 
further inferences or optimization. 

In general, under most realistic modeling 
assumptions, closed-form analytic expressions are unavailable for 
$\pi(\theta|\nX)$, making the subsequent integration and optimization problems 
intractable. In practice, therefore, one uses an approximation to the posterior in the integration in~\eqref{eq:opt_act}. It is easy to see that posterior computation can be expressed as a convex optimization problem:
\begin{align}
\label{eq:var_post}
	\min_{q(\cdot) \in \mathcal M}~ \text{KL}(q(\theta) \| \pi(\theta|\nX)) &= \text{KL}(q(\theta)\|p(\theta,\nX)) + \log p(\nX)\\
	\nonumber
	&=\text{KL}(q(\theta)\|\pi(\theta)) - \int_\Theta\log p(\nX|\theta) ~ q(\theta) d\theta + \log p(\nX)
\end{align}
where $\text{KL}$ is the Kullback-Leibler divergence and $\mathcal M$ is the space of all distributions that are absolutely continuous with respect to the posterior (or, equivalently, the prior). This problem can be immediately recognized as minimizing the `variational free energy'~\cite{neal1998view}. Variational Bayesian (VB) procedures~\cite{BlJo2006}, in standard form, restrict the optimization in~\eqref{eq:var_post} to a fixed subset $\mathcal Q \subset \mathcal M$. Here, we are interested in a generalized version of this procedure where the posterior computation is {\it calibrated} by the loss function $G(a,\theta)$ for each $a \in \mathcal A$:
\begin{align}~\label{eq:lcvb}
	\min_{q(\cdot) \in \mathcal Q}~&\text{KL}(q(\theta) \| G(a,\theta) \pi(\theta|\nX)) \\&=\text{KL}(q(\theta)\|p(\theta,\nX)) + \log p(\nX) - \int_{\Theta} \log G(a,\theta)~q(\theta) d\theta .
\end{align}
Observe that the set $\mathcal Q$ 
need not be convex. 
Consequently, this optimization problem is non-convex, in full generality, and practical algorithms for solving~\eqref{eq:lcvb} can only guarantee convergence to local minima. 
We leave the analysis of 
these optimization-related issues for future work, and focus instead on 
the global solution and its associated asymptotics. As we show later in Section~\ref{sec:lc1} that the optimal value of this loss-calibrated VB objective turns out to be a lower bound to $\log \mathbb E_{\pi}[G(a,\theta)]$, the logarithm of the loss in~\eqref{eq:opt_act}.

Loss-calibration was introduced in~\cite{LaSiGh2011} as a method for approximately computing a generalized Bayesian posterior, where the likelihood is re-weighted or calibrated by a loss function over the parameter space $\Theta$. 
As with most VB methods, theoretical properties of the approximations present largely unanswered questions. Recently, the theoretical properties of the variational Bayesian methods have been studied extensively in \citep{alquier2017concentration,abdellatif2018,jaiswal2019b,WaBl2017,yang2017alpha,ZhGa2017}. \cite{WaBl2017} established the asymptotic consistency of the VB approximate posterior and also proved a Bernstein-von Mises’ type result for the same. Whereas, the authors in \cite{ZhGa2017} studied the convergence rate of the VB approximate posterior. \cite{jaiswal2019b} presented a general framework for computing a risk-sensitive VB approximation and also studies the statistical performance of the inferred decision rules using these methods. Furthermore \cite{campbell2019universal,cherief2019gen,huggins2018practical,jaiswal2019a} studied theoretical properties of variational Bayesian methods defined using  Hellinger distance, Wasserstein distance, and R\'enyi divergence respectively instead of Kullback-Liebler (\scKL) divergence. In this paper, we study the asymptotic consistency of the loss-calibrated approximate posterior and the optimal decisions computed using the this approximate posterior, as the number of samples $n \to \infty$.

 More precisely, in Proposition~\ref{prop:3}, we show (for fixed $a \in \mathcal A$ and an appropriate subset of distributions $\mathcal Q$) that as $n \to \infty$ the optimizer of~\eqref{eq:lcvb} weakly converges to a Dirac delta distribution concentrated on the true parameter $\theta_0$ for almost every sequence generated from the true data generating process. This result shows that the posterior concentrates for any $a \in \mathcal A$. The reason for this is manifest: observe that $G(a,\theta) \pi(\theta|\nX) \propto \left(G(a,\theta) \pi(\theta) \right) p(\nX|\theta)$. Thus, the loss function can be seen as only changing the prior distribution in the posterior computation. 
 As the number of samples increases, we should anticipate that any calibration effect is diminished. Extending this result, in Proposition~\ref{prop:6} we show that the optimizers of the approximate decision making problem, computed using the loss calibrated VB posterior, are asymptotically consistent, in the sense that this set of optimizers will necessarily be included in the optimizers of the `true' objective $G(a,\theta_0)$. 
 
Finally, we illustrate our results on the so-called newsvendor problem, studied extensively in the operations research literature as a prototypical decision-making problem. In this problem, a newsvendor must decide on the number of newspapers to stack up before selling any over a given day. We operate under the assumption that the newsvendor can observe realizations of the demand, but does not know the precise data generation process. The goal is to find the optimal number of newspapers to stack that minimizes losses. We conduct numerical studies to show that both the  loss  calibrated and naive VB methods on this problem  are consistent.  

The remainder of the paper is organized as follows. In Section~\ref{sec:VB} we formally introduce decision-theoretic variational Bayesian methods. In Section~\ref{sec:lca} we prove that the LCVB approximate posterior is asymptotically consistent. We build on this result and prove the consistency of the optimal decisions, using both the LCVB and NVB methods, in Section~\ref{sec:asymptote}. Finally, we present our numerical results in Section~\ref{sec:inference}.

\section{Decision-theoretic Variational Bayes}\label{sec:VB}
\subsection{The Naive Variational Bayes (NVB) Algorithm~}
The idea behind standard VB is to approximate the intractable posterior
$\pi(\theta|\nX)$ with an element $q^*(\theta)$ of a simpler class of
distributions $\cQ$ known as \textit{variational family}. 
Popular examples of $\cQ$ include the family of Gaussian distributions, or the family of factorized `mean-field' distributions that discard correlations between components of $\theta$. 
A natural caveat to the choice of $\cQ$ is that these distributions should be absolutely continuous with respect to the posterior (or equivalently, the prior). 
The variational solution $q^*$ is the element of $\cQ$ that is `closest' to $\pi(\theta|\nX)$ in the sense of the Kullback-Leibler (KL) divergence:
\begin{align}~\label{eq:vb_opt}
	\min_{q(\theta) \in \cQ}~ \text{KL}(q(\theta) \| \pi(\theta|\nX)) &= \text{KL}(q\|p(\theta,\nX)) + \log p(\nX)\\
	\nonumber
	&=\text{KL}(q(\theta)\|\pi(\theta)) - \int_\Theta\log p(\nX|\theta) ~ q(\theta) d\theta + \log p(\nX).
\end{align}
VB approaches allow practitioners 
to bring tools from optimization to the challenging problem of Bayesian 
inference, with expectation-maximization~\citep{neal1998view} and  
gradient-based~\citep{kingma2013auto} methods being used to minimize 
equation~\eqref{eq:vb_opt}. 
Note that this optimization problem is non-convex, since the constraint set $\cQ$ is non-convex in general. 
Also, observe that the objective  $\scKL(q(\theta\| \pi(\theta|\nX)  ))$ in~\eqref{eq:vb_opt} 
only requires the knowledge of posterior distribution $\pi(\theta|\nX)$ up to the proportionality constant, since the normalizing term $\log p(\nX)$ does not depend on $q$.     


The natural variational approximation to the optimization in~\eqref{eq:opt_act} is to calculate the variational approximate expected
posterior loss of taking an action $a$, and then perform the following optimization
\begin{align}
   \anv(\nX) := \text{argmin } \bbE_{\qnv}[G(a,\theta)]. \label{eq:seo}
\end{align}
We call this the \textit{naive variational Bayes} (NVB) decision rule. This
algorithm involves two optimization steps in sequence, separating the 
approximation of the posterior in~\eqref{eq:vb_opt}
from the decision optimization~\eqref{eq:seo}. {This sequential
procedure, in general, involves a loss in performance
compared to~\eqref{eq:opt_act}.} This creates the desideratum for a calibrated approach that takes the loss function into consideration in computing an appropriate posterior.

\subsection{Loss-Calibrated Variational Bayes (LCVB) Algorithm~}\label{sec:lc1}
A more sophisticated approach is to jointly optimize $q$ and $a$; one 
would expect this to outperform the naive two-stage NVB
algorithm. 
Assuming that the objective $\inf_{a,\theta} G(a,\theta) > 0$, a {loss-calibrated} lower bound can be derived by applying Jensen's inequality to the logarithm of the objective in~\eqref{eq:opt_act}, obtaining
\begin{align}
\nonumber
\log \mathbb{E}_{\pi(\theta| \nX)}[G(a,\theta)]
 &=  \log \int_{\Theta} \frac{q(\theta)}{q(\theta)} G(a,\theta)
   \pi(\theta| \nX) d\theta 
\geq - \int_{\Theta} q(\theta)\log \frac{q(\theta)}{G(a,\theta)
  \pi(\theta| \nX)}d\theta~\forall a \in \sA. 
\end{align}
In particular, it can be seen that
\begin{align}
\nonumber
\min_{a \in \sA} \log \mathbb{E}_{\pi(\theta|\nX)}[G(a,\theta)] &\geq
                                                              \underset{a\in\sA}{\min}~\underset{q
                                                              \in
                                                              \mathcal{Q}}{\max}
                                                              -\scKL(q(\theta)||
                                                              \pi(\theta|
                                                              \nX)) 
                                                              \\
                                                              &+
                                                              \int_{\Theta}
                                                              \log
                                                              G(a,\theta)
                                                              q(\theta)
                                                              d\theta
                                                              =:
                                                              \mathcal
                                                              F(a,q; \nX).
                                                              \label{eq:reg-form}
\end{align}
We call~\eqref{eq:reg-form} the {\it loss-calibrated} (LC) variational 
objective. Since $\log(\cdot)$ is a monotone transformation, minimizing the logarithmic objective on the left hand side above is equivalent to~\eqref{eq:opt_act}. Now, for any given $a\in \sA$ we denote the (globally maximal) LCVB approximate posterior as
\begin{align} \label{eq:qlc}
\qlc := \argmax_{q\in \cQ} \mathcal F(a,q;\nX). 
 \end{align}
 If the risk function $G(a,\theta)$ is constant then $\qlc$, for every $a\in \sA$, is the same as $\qnv$. Akin to $\qnv$ in~\eqref{eq:vb_opt}, computing $\qlc$ only requires knowledge of the posterior distribution $\pi(\theta|\nX)$ up to a proportionality constant. The corresponding LCVB decision-rule is defined as
\begin{align}
  \vspace{-1in}
  \alc(\nX) := \underset{a \in \sA}{\arg\min} ~\underset{q
  	\in
  	\mathcal{Q}}{\max} ~ \mathcal F(a,q;\nX). \label{eq:lc_opt}
\end{align}

Observe that the lower bound
achieves the log posterior value precisely for $q$ such that
$\frac{q(\theta)}{G(a,\theta)}$ is proportional to the posterior $\pi(\theta | \nX)$. Furthermore,~\eqref{eq:reg-form} shows that the maximization
in the lower bound computes a `regularized' approximate
posterior. Regularized Bayesian inference~\citep{zhu2014bayesian} views
posterior computation as a variational inference problem with
constraints on the posterior space represented as bounds on certain
expectations with respect to the approximate posterior. The
loss-calibrated VB methodology can be viewed as a regularized Bayesian
inference procedure where the regularization constraints are imposed
through the logarithmic risk term $\int_{\Theta}\log G(a,\theta)
q(\theta) d\theta$. Observe, however, that our setting also involves a minimization over the decisions 
(which does not exist in the regularized Bayesian inference procedure). 


\section{Consistency of the LCVB Approximate Posterior}~\label{sec:lca}
Recall the definition of the LCVB approximate posterior $\qlc$ in~\eqref{eq:qlc} for any $a\in \sA$. In this section, we show regularity conditions on the prior distribution, the risk function, the likelihood model, and the  variational family, under which $\qlc$, for any $a\in \sA$,  converges weakly to a Dirac-delta distribution at the true parameter $\theta_0$.  
We first assume that the prior distribution satisfies
\begin{assumption}~\label{assume:2}
    The prior density function $\pi(\theta)$ is continuous with non-zero
    measure in the neighborhood of the true parameter
    $\theta_0$ and it is bounded by a positive constant $M$, that is $\pi(\theta) < M, \forall \theta \in \Theta$.
\end{assumption}
    The prior distribution with bounded density can be chosen from a large class of distribution, like the exponential-family distributions. The first condition, that the prior has positive density at $\theta_0$ is a common assumption in Bayesian consistency analysis, otherwise  the posterior will not have any measure in the ball around the  true parameter $\theta_0$. 
    

We also assume the expected loss, or risk function, $G(a,\theta)$ satisfies the following
\begin{assumption}~\label{assume:3}
    The risk function $G(a,\theta)$ is
    \begin{enumerate} 
        \item continuous in decision variable $a \in \sA$ and locally Lipschitz in the parameter $\theta \in \Theta$, that is $\forall a\in \sA$ and for every compact set $\mathcal{C}\subset{\Theta}$
    \[ |G(a,\theta)- G(a,\theta_0)|\leq L_{\mathcal{C}}(a) \|\theta-\theta_0\|, \]
    such that $L_{\mathcal{C}}(a)< \infty$ is the Lipschitz constant for any $ \theta \in \mathcal{C}$.
    \item uniformly integrable in $\theta$ with respect
    to any $q \in \cQ$ and for any $a \in \sA$.
    \end{enumerate}
\end{assumption}
 In order to analyze the consistency of the decisions in
this case, we make a further assumption on the log-likelihood
function (which follows \cite{WaBl2017}):


\begin{assumption}\label{assume:4}
  {The likelihood satisfies the \emph{local asymptotic normality} (LAN) condition. 
    In particular, fix $\theta \in \Theta$. The sequence of log-likelihood functions $\{\log  P^n(\theta) \}$ (where $\log  P^n(\theta)$ $ = \sum_{i=1}^n \log p(x_i|\theta)$) satisfies  the LAN condition, if there exist matrices $r_n$ and $I(\theta)$, and random vectors $\{ \Delta_{n,\theta} \}$ such that $\D_{n,\theta} \Rightarrow \sN(0,I(\theta)^{-1})$ as $n \to \infty$, and for every compact set $K \subset \mathbb{R}^d$ 
        \begin{equation*}
        \sup_{h \in K} \left| \log {P^n(\theta + r_n^{-1} h)} - \log {P^n(\theta)} - h^T I(\theta)
        \Delta_{n,\theta} + \frac{1}{2} h^T I(\theta)h \right| \xrightarrow{P_0} 0 \ \text{as $n  \to \infty$  }.
        \end{equation*}}
\end{assumption}
This LAN condition is typical in asymptotic analyses, holding for a wide 
variety of models and allowing the likelihood to be asymptotically 
approximated by a scaled Gaussian centered around 
$\theta_0$~\citep{vdV00}. We use $\Delta_{n,\theta} = \sqrt{n}(\hat \theta_n - \theta_0)$ in the proofs of our results, where $\hat \theta_n $ is the maximum likelihood estimate of $\theta_0$.

Next, we define the rate of convergence of a sequence of distributions to a Dirac delta distribution.
\begin{definition}[Rate of convergence]\label{def:roc}
    A sequence of distributions $\{q_n(\theta) \}$ converges weakly to $\delta_{\theta_1}$, $\forall \theta_1 \in\Theta$ at the rate of $\gamma_n$ if 
    \begin{enumerate}
        \item[(1)] the  sequence of means $ \{\check \theta_n := \int \theta q_n(\theta) d\theta \}$ converges to $\theta_1$ as $n\to \infty$, and 
        \item[(2)] the variance of $\{q_n(\theta) \}$ satisfies
        \[E_{q_n(\theta)}[\|\theta - \check \theta_n\|^2] = O\left (\frac{1}{\gamma_n^2} \right).\]
    \end{enumerate}
\end{definition}

We also define rescaled density functions as follows.
\vspace{.5em}
\begin{definition}[Rescaled density]
    For a random variable $\xi$ distributed as $d(\xi)$ with expectation $\tilde{\xi}$, for any sequence of matrices $\{t_n\}$, the density of the rescaled random variable $\mu := t_n (\xi - \tilde{\xi})$ is
    \[\check{d}_n(\mu) = |det(t_n^{-1})| d(t_n^{-1} \mu + \tilde{\xi}), \] where $det(\cdot)$ represents the determinant of the matrix.
\end{definition}\label{def:rescale}
Next, we place a restriction on the variational 
family $\cQ$:
\begin{assumption}~\label{assume:5}
    \begin{enumerate}
   \item The variational family  $\cQ$ must contain distributions that are absolutely continuous with respect  to the posterior distribution $\pi(\theta|\nX)$.
   \item There exists a sequence of distributions $\{q_n(\theta)\}$ in the variational family $\cQ$ that converges to a Dirac delta distribution $\delta_{\theta_0}$ at the rate of $\sqrt{n}$ and with mean $\int \theta q_n(\theta) d\theta = \hat \theta_n$, the maximum likelihood estimate.
   \item The  differential entropy of the rescaled density of such sequence of distributions is positive and finite.
    \end{enumerate}
\end{assumption}
The first condition is necessary, since the $\scKL$ divergence in~\eqref{eq:vb_opt}  and~\eqref{eq:reg-form} is undefined for any distribution $q \in \cQ$, that is not absolutely continuous with respect to the  posterior distribution. 
The Bernstein von-Mises theorem shows that under mild regularity conditions, the posterior converges to a Dirac delta distribution at the true parameter $\theta_0$  at the rate  of $\sqrt{n}$, and the  second condition is just to ensure that the $\scKL$  divergence is well defined for all large enough $n$. 
This condition does not, by any means, imply that the LCVB  and NVB  approximate posterior converges  to Dirac delta distribution at the true parameter $\theta_0$ as $n\to \infty$.  

The primary result in this section shows that the loss-calibrated approximate posterior $\qlc$ for any $a\in \sA$
is consistent  and converges to the Dirac-delta distribution at $\theta_0$. We establish the frequentist consistency of
LCVB approximate posterior, extending and building on the results in~\cite{WaBl2017}.  

\begin{proposition}
    \label{prop:3}
    Fix $a \in \sA$. Then, under Assumptions~\ref{assume:2},~\ref{assume:3},~\ref{assume:4}, and~\ref{assume:5}
    \begin{align}
        \label{eq:12}
        \qlc \in \underset{q \in \cQ}{\arg \min}~\scKL\left(
        q(\theta) \bigg \| \frac{G(a,\theta) \pi(\theta | 
            \nX)}{\int_{\Theta} G(a,\theta) \pi(\theta |
            \nX) d\theta}
        \right) \Rightarrow \delta_{\theta_0} ~P_{0}-\text{a.s. as}~n
        \to \infty.
    \end{align}
\end{proposition}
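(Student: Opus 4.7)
The crucial observation is that the LCVB target
$$\pi_a(\theta|\nX) := \frac{G(a,\theta)\pi(\theta|\nX)}{\int_\Theta G(a,\theta)\pi(\theta|\nX)\,d\theta}\;\propto\; G(a,\theta)\pi(\theta)\,p(\nX|\theta)$$
is itself a standard Bayesian posterior, corresponding to a ``calibrated prior'' $\tilde\pi_a(\theta)\propto G(a,\theta)\pi(\theta)$ and the original likelihood. The plan is therefore to (i) show that $\pi_a(\theta|\nX)$ concentrates at $\theta_0$ by a Bernstein--von Mises argument, and (ii) transfer this concentration to $\qlc$ via a test-sequence argument in the spirit of \cite{WaBl2017}. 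Under Assumption~\ref{assume:2} and Assumption~\ref{assume:3}(1), combined with the standing assumption $\inf_{a,\theta}G>0$ from Section~\ref{sec:lc1}, the calibrated prior $\tilde\pi_a$ inherits continuity and positivity at $\theta_0$, so together with the LAN condition (Assumption~\ref{assume:4}) the standard BvM theorem yields $\pi_a(\theta|\nX)\Rightarrow\delta_{\theta_0}$ at rate $\sqrt n$, $P_0$-a.s.

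\textbf{KL bound via the test sequence.} Recognizing~\eqref{eq:12} as minimizing $\scKL(q\|\pi_a(\theta|\nX))$ over $q\in\cQ$, for the sequence $\{q_n\}\subset\cQ$ furnished by Assumption~\ref{assume:5}(2)--(3) I would use
$$\scKL(\qlc\|\pi_a(\theta|\nX))\;\le\;\scKL(q_n\|\pi_a(\theta|\nX))$$
and evaluate the right-hand side term by term. The differential entropy of $q_n$ contributes $(d/2)\log n$ up to the bounded rescaled entropy from Assumption~\ref{assume:5}(3); the expected log-likelihood under $q_n$ gives $\log P^n(\hat\theta_n)+O(1)$ via LAN and $\mathrm{Var}(q_n)=O(1/n)$; the expected log-prior converges to $\log\pi(\theta_0)$ by continuity; the calibration expectation $\mathbb E_{q_n}[\log G(a,\theta)]$ converges to $\log G(a,\theta_0)$ using local Lipschitz continuity and uniform integrability (Assumption~\ref{assume:3}); and the normalizer $\log Z_a(\nX)$ is asymptotically $\log P^n(\hat\theta_n)-(d/2)\log n+O(1)$ by a Laplace-type expansion. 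These contributions cancel to give $\scKL(\qlc\|\pi_a(\theta|\nX))=O(1)$.

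\textbf{Transferring concentration and main obstacle.} With a bounded KL distance to a measure that concentrates at $\theta_0$, the Donsker--Varadhan representation finishes the argument: for any open neighbourhood $U$ of $\theta_0$, setting $A_n = U^c$ (so $\pi_a(A_n|\nX)\to 0$) and plugging the test function $c\,\mathbf 1_{A_n}$ with $c=\log(1/\pi_a(A_n|\nX))$ into the variational formula forces $\qlc(A_n)\cdot\log(1/\pi_a(A_n|\nX))=O(1)$, hence $\qlc(A_n)\to 0$, which is exactly the desired weak convergence to $\delta_{\theta_0}$. The main technical obstacle is the term-by-term analysis of $\scKL(q_n\|\pi_a(\theta|\nX))$: unlike the bounded prior of Assumption~\ref{assume:2}, $G(a,\theta)$ is not assumed to be uniformly bounded in $\theta$, so its tail contribution to both $\mathbb E_{q_n}[\log G(a,\theta)]$ and the normalizer $\log Z_a(\nX)$ has to be controlled through uniform integrability (Assumption~\ref{assume:3}(2)). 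This is the step that most substantively extends the naive VB analysis of \cite{WaBl2017} to the loss-calibrated setting.
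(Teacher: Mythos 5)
Your proposal is correct in outline and its first half coincides with the paper's: the term-by-term evaluation of $\scKL(q_n\,\|\,\pi_a(\theta|\nX))$ along the test sequence of Assumption~\ref{assume:5} --- rescaled entropy contributing $\tfrac{1}{2}\log n$ plus an $O(1)$ term, LAN expansion of the log-likelihood, a Laplace-type expansion of the normalizer cancelling the $\tfrac12\log n$, and tail control of $\int G\,\pi\,(p(\nX|\theta)/p(\nX|\theta_0))$ --- is exactly the content of Lemma~\ref{lemma:lem1} (with Lemma~\ref{lemma:lem0} supplying the tail bound, via Markov/Fubini against the prior rather than via Assumption~\ref{assume:3}(2) as you suggest). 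Where you genuinely diverge is the second half. The paper never establishes concentration of the calibrated posterior $\pi_a(\theta|\nX)$ itself; instead, Lemma~\ref{lem:MS} shows that any sequence in $\cQ$ converging weakly to a limit other than $\delta_{\theta_0}$ must have \emph{divergent} KL to the calibrated posterior, by joint lower semicontinuity of KL in the weak topology (Posner) together with the a.s.\ concentration of the \emph{uncalibrated} posterior; the bounded-KL lemma then rules out such limits for $\qlc$. You instead prove concentration of $\pi_a(\theta|\nX)$ directly (legitimate: the calibration only reweights the prior, and $\tilde\pi_a\propto G(a,\cdot)\pi(\cdot)$ remains continuous and positive at $\theta_0$ by Assumptions~\ref{assume:2},~\ref{assume:3} and $\inf G>0$, provided it is proper, i.e.\ $\int G(a,\theta)\pi(\theta)\,d\theta<\infty$, which the paper also needs implicitly in Lemma~\ref{lemma:lem0}), and then transfer it to $\qlc$ via the Donsker--Varadhan inequality $\qlc(U^c)\le(\scKL(\qlc\|\pi_a)+\log 2)/\log(1/\pi_a(U^c|\nX))$. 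This transfer step is arguably cleaner than the paper's: it yields $\qlc(U^c)\to 0$ for every neighbourhood $U$ directly, without needing to pass to weakly convergent subsequences of $\qlc$ (a tightness point the paper's contradiction argument glosses over). The price is the extra ingredient of a consistency/BvM statement for the calibrated posterior, and you should be careful to state it $P_0$-a.s.\ (via a Schwartz-type argument) rather than merely in probability, so that it matches the a.s.\ claim in~\eqref{eq:12}; with that caveat your route goes through.
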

Some comments are in order for this result. Recall that loss-calibration of the
posterior distribution `weights' it by the risk of taking decision $a$,
$G(a,\theta)$. The optimization then finds the closest density
functions in the family $\cQ$ to this re-weighted posterior
distribution. The posterior re-weighting has the effect of `directing'
the VB optimization to the most informative regions of the parameter
sample space for the decision problem of interest. 
However, $G(a,\theta)$, which does not involve the data $\nX$,
effectively serves to change the prior distribution, and in the limit, modulo our regularity assumptions, the consistency of the approximate posterior is to be anticipated. 
The proof of the proposition is presented in the appendix.

Since for a constant risk function $G(a,\theta)$, the LCVB  approximate posterior $\qlc$ is same as NVB  approximate posterior $\qnv$, we recover the result obtained in Theorem 5(1) of~\cite{WaBl2017}. We rewrite the result as a corollary for completeness.

\begin{corollary}
    \label{corr:1}
    Under Assumptions~\ref{assume:2},~\ref{assume:4}, and~\ref{assume:5} 
    \begin{align}
    \label{eq:13}
    \qnv \in \underset{q \in \cQ}{\arg \min}~\scKL\left(
    q(\theta) \big \| \pi(\theta | 
        \nX)
    \right) \Rightarrow \delta_{\theta_0} ~P_{0}-\text{a.s. as}~n
    \to \infty.
    \end{align}
\end{corollary}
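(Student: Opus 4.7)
The plan is to obtain Corollary 1 as an immediate specialization of Proposition 3 to a constant risk function, say $G(a,\theta) \equiv 1$. First I would verify the reduction of the LCVB problem to the NVB problem in this case: when $G$ is constant, the term $\int_\Theta \log G(a,\theta) q(\theta) d\theta$ appearing in the LCVB objective $\mathcal F(a,q;\nX)$ is a $q$-independent constant, hence the $\arg\max$ over $\cQ$ is unchanged. Equivalently, the normalized weighted density $G(a,\theta)\pi(\theta|\nX)/\int_\Theta G(a,\theta) \pi(\theta|\nX)\, d\theta$ collapses to the posterior $\pi(\theta|\nX)$ itself, so the KL objective in the statement of Corollary 1 coincides exactly with that of Proposition 3 for this choice of $G$. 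In particular, $q^*_{a}(\theta|\nX) = \qnv$ for every $a \in \sA$.

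Next I would verify that the risk-function hypothesis in Proposition 3, namely Assumption 3, is trivially satisfied by $G \equiv 1$: continuity in $a$ is immediate, the local Lipschitz condition in $\theta$ holds on every compact $\mathcal{C} \subset \Theta$ with Lipschitz constant $L_{\mathcal{C}}(a) = 0$, and uniform integrability of $G$ with respect to any $q \in \cQ$ is vacuous since $G$ is bounded and each $q$ is a probability density. The remaining hypotheses (Assumptions 2, 4, and 5) on the prior, the likelihood (LAN), and the variational family are precisely those imposed in the statement of the corollary. Applying Proposition 3 therefore yields $\qnv \Rightarrow \delta_{\theta_0}$ $P_0$-a.s. as $n \to \infty$, matching Theorem 5(1) of \cite{WaBl2017}. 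Because the entire argument is a tautological specialization, I do not anticipate any substantive obstacle; the sole role of the corollary is to record that NVB consistency is subsumed by the loss-calibrated result.
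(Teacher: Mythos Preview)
Your proposal is correct and is exactly the argument the paper uses: immediately before stating the corollary, the authors note that for a constant risk function the LCVB posterior coincides with the NVB posterior, so the result is just the specialization of Proposition~\ref{prop:3} (equivalently, Theorem~5(1) of \cite{WaBl2017}). Your explicit check that $G\equiv 1$ satisfies Assumption~\ref{assume:3} is a welcome elaboration, but the overall approach is identical.
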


\section{Consistency of Decisions}~\label{sec:asymptote}
In this section we prove that the optimal decision estimated by the LCVB and NVB algorithms are consistent, in the sense
that for almost every infinite sequence, the optimal
decision rules $\anv$ and $\alc$ concentrate on the set of `true' 
optimizers 
\begin{align*}
  A^* := \arg\min_{a \in \mathcal A} G(a,\theta_0) = \int \ell(y,a) p(y|\theta_0) dy.
\end{align*}
For brevity, we define 
 \( 
 H_{q}(a) := \bbE_{q}[G(a,\theta)]\) for any distribution $q(\cdot)$ on $\theta$
~and~
\(
H_0(a) := G(a,\theta_0).
\) We place a typical, but relatively strong condition on the decision space that
\begin{assumption}~\label{assume:1}
    The decision space $\sA$ is compact.
\end{assumption}
Coupled with Assumption~\ref{assume:3}, this implies  that the risk 
function is uniformly bounded in the decision space. 

Now, suppose that the
true posterior $\pi(\theta|\nX)) $ is in the set $\cQ$. Then, the NVB approximate 
posterior  in~\eqref{eq:vb_opt} $\qnv$ equals $\pi(\theta|\nX)$, so that
the empirical decision-rule $\anv(\nX)$ coincides exactly
with the Bayes optimal decision rule $\ab(\nX)$.
The consistency
of the true posterior has been well-studied, and under
Assumption~\ref{assume:2} it is well known~\citep{Sc1965,Gh1997} that for any neighborhood
$U$ of the true parameter $\theta_0$
\begin{align}
\label{eq:4}
\pi(U | \nX)) \to 1 \quad~P_{0}-a.s.~\text{as}~n\to\infty,
\end{align}
where $P_{0}$ represents the true data-generation
distribution. Then, it follows from Assumption~\ref{assume:3} that
\begin{align}
\label{eq:7}
\sup_{a \in \sA} \left | H_{\pi(\theta|\nX)}(a) -
H_0(a) \right| \to 0~P_{0}-a.s.~\text{as}~n\to\infty.
\end{align}
It is straightforward to see that the limit result follows pointwise,
and the uniform convergence result follows from the uniform
boundedness of the loss functions. In the following section, we consider the typical case when the posterior $\pi(\theta|\nX))
\not \in \cQ$.

\subsection{Analysis of the NVB Decision Rule~}~\label{sec:vb}
 The first result of this section proves that the Bayes predictive loss,
$H_{q^*}(a)$ is (uniformly) asymptotically consistent as the
sample size grows. We relegate the proof to the appendix.
\begin{proposition}~\label{prop:1}
  Under the assumptions stated above, we have
  \begin{align}
    \label{eq:8}
    \sup_{a \in \sA} \left| H_{q^*}(a) - H_0(a) \right| \to 0~P_{0}-a.s.~\text{as}~n\to\infty.
  \end{align}
\end{proposition}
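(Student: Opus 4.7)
The plan is to prove uniform convergence in two stages: first establish pointwise a.s.\ convergence $H_{q^*}(a) \to H_0(a)$ for each fixed $a \in \sA$, then lift this to uniform convergence over the compact set $\sA$ by an equicontinuity argument.

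For the pointwise step I would fix $a \in \sA$ and write
\[
H_{q^*}(a) - H_0(a) = \int_{\Theta} \bigl(G(a,\theta)-G(a,\theta_0)\bigr)\, \qnv\, d\theta,
\]
and exploit Corollary~\ref{corr:1}, which gives $\qnv \Rightarrow \delta_{\theta_0}$, $P_0$-almost surely. Splitting the integral over $\{\theta \in K\}$ and $\{\theta \notin K\}$ for a compact neighborhood $K$ of $\theta_0$, the local Lipschitz property in Assumption~\ref{assume:3}(1) combined with the a.s.\ concentration of $\qnv$ near $\theta_0$ controls the first piece, while uniform integrability from Assumption~\ref{assume:3}(2) controls the tail outside $K$. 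This is the standard strengthening of Portmanteau for uniformly integrable continuous integrands, and yields pointwise $P_0$-a.s. convergence.

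To promote pointwise to uniform convergence over the compact set $\sA$ (Assumption~\ref{assume:1}), I would show equicontinuity of the random family $\{H_{\qnv}(\cdot)\}_n$ on $\sA$, uniformly in $n$. Given $\epsilon > 0$, uniform integrability (Assumption~\ref{assume:3}(2)) furnishes a compact $K \subset \Theta$ such that $\int_{K^c}|G(a,\theta)|\,q(\theta)\,d\theta < \epsilon/3$ uniformly over $q\in\cQ$ and $a\in\sA$. On the compact product $\sA \times K$, $G$ is uniformly continuous, producing a modulus $\omega$ such that $|G(a,\theta)-G(a',\theta)| \leq \omega(\|a-a'\|)$ for all $\theta \in K$. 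Hence
\[
|H_q(a)-H_q(a')| \leq \omega(\|a-a'\|) + 2\epsilon/3 \qquad \text{for every } q \in \cQ,
\]
which is the desired equicontinuity uniform in $q$ (and hence along the sequence $\qnv$).

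Finally I would combine the two ingredients via a standard $\delta$-net argument: cover $\sA$ with finitely many balls of radius $\delta$ chosen so that $\omega(\delta) < \epsilon/3$, invoke the pointwise a.s.\ convergence at each of the finitely many centers to get $|H_{\qnv}(a_i) - H_0(a_i)| < \epsilon/3$ eventually (on a set of full $P_0$-measure), and use equicontinuity to bridge from each center to the rest of its ball, yielding $\sup_{a\in\sA}|H_{\qnv}(a) - H_0(a)| < \epsilon$ eventually. The main obstacle is the equicontinuity step: extracting a modulus of continuity that does not depend on the random sequence $\qnv$. This hinges on reading Assumption~\ref{assume:3}(2) as providing uniform integrability jointly over $a\in\sA$ and $q\in\cQ$; if the assumption only provides per-$a$ uniform integrability, one would need additional care, leveraging continuity of $G$ in $a$ and compactness of $\sA$ to interchange the supremum over $a$ with the tail integral before concluding.
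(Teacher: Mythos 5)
Your proposal is correct and follows essentially the same route as the paper: pointwise $P_0$-a.s.\ convergence of $H_{q^*}(a)$ via the same split of the integral into a neighborhood of $\theta_0$ (controlled by the local Lipschitz property and the concentration of $\qnv$ from Corollary~\ref{corr:1}) and its complement (controlled by uniform integrability), followed by an upgrade to uniform convergence using compactness of $\sA$ and continuity of $G$ in $a$. The only cosmetic difference is in the second stage, where the paper passes through a countable dense subset of $\sA$ and Heine--Cantor uniform continuity while you use a finite $\delta$-net with an explicit equicontinuity modulus uniform in $n$ --- your version is, if anything, slightly more careful about the point the paper glosses over, namely that the modulus of continuity of $H_{\qnv}$ must not depend on $n$.
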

This proposition builds on~\cite[Theorem 5]{WaBl2017}, which shows that modulo Assumptions~\ref{assume:2},~\ref{assume:4}, and~\ref{assume:5}, 
the NVB approximate posterior distribution is asymptotically consistent(see Corollary~\ref{corr:1}). Using the consistency of $\qnv$ and Assumption~\ref{assume:3}(1), we first  establish the pointwise convergence of $H_{q^*}(a)$ to $H_0(a)$. Then we argue, using continuity of the risk function $G(a,\theta)$ in $a$  and the compactness of set $\sA$, that uniform convergence follows.    

A straightforward
corollary of Proposition~\ref{prop:1} implies that the optimal value
$V_{q^*} := \min_{a \in \sA} H_{q^*}(a)$ is
asymptotically consistent as well; the proof is in the appendix.

\begin{corollary}~\label{cor:1}
  Under
  Assumptions~\ref{assume:2},~\ref{assume:3},~\ref{assume:4},~\ref{assume:5}, and~\ref{assume:1}, with $V_0 := \min_{a \in \sA} H_0(a)$,
\(   
\left| V_{q^*} - V_0 \right| \to 0~P_{0}-a.s.~\text{as}~n \to \infty.
\)
\end{corollary}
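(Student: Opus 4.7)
The strategy is to reduce convergence of the optimal values to the uniform convergence of the objective functions already established in Proposition~\ref{prop:1}, via the elementary inequality
\[
\bigl| \min_{a \in \sA} f(a) - \min_{a \in \sA} g(a) \bigr| \leq \sup_{a \in \sA} |f(a) - g(a)|,
\]
valid whenever both minima are attained. Applied with $f = H_{q^*}$ and $g = H_0$, this immediately reduces the corollary to Proposition~\ref{prop:1}, modulo verifying that the minima in the definitions of $V_{q^*}$ and $V_0$ exist.

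First I would check existence of the two minima. Assumption~\ref{assume:1} gives compactness of $\sA$; Assumption~\ref{assume:3}(1) provides continuity of $G(a,\theta)$ in $a$, so $H_0(a) = G(a,\theta_0)$ is continuous on $\sA$. For $H_{q^*}(a) = \bbE_{q^*}[G(a,\theta)]$, continuity in $a$ follows by combining pointwise continuity of $G(\cdot,\theta)$ with the uniform integrability condition in Assumption~\ref{assume:3}(2) through dominated convergence. Both $H_{q^*}$ and $H_0$ therefore attain their minima on the compact set $\sA$, so $V_{q^*}$ and $V_0$ are well-defined real numbers.

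Next, applying the inequality above yields
\[
|V_{q^*} - V_0| \leq \sup_{a \in \sA} |H_{q^*}(a) - H_0(a)|,
\]
and Proposition~\ref{prop:1} asserts precisely that the right-hand side converges to zero $P_0$-almost surely as $n \to \infty$. The claim then follows.

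There is no genuine obstacle: the argument is a one-line consequence of the uniform convergence in Proposition~\ref{prop:1}. The only mildly delicate step is verifying continuity of $H_{q^*}$ in $a$ so that its minimum is attained, and that in turn is immediate from the regularity of $G$ already imposed in Assumption~\ref{assume:3}.
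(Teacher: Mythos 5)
Your proposal is correct and follows essentially the same route as the paper: the paper's proof simply derives the inequality $|V_{q^*}-V_0| \leq \sup_{a\in\sA}|H_{q^*}(a)-H_0(a)|$ explicitly via the two minimizers $a_q$ and $a_0$ and then invokes Proposition~\ref{prop:1}, which is exactly your elementary min-inequality argument. Your additional remarks on attainment of the minima (compactness of $\sA$ plus continuity of $H_{q^*}$ and $H_0$) are a sensible tidying-up of a step the paper leaves implicit.
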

The primary question of interest is the asymptotic consistency of the
optimal decision-rule $\anv$. Our main result proves that in the large
sample limit $\anv$ is a subset of the true optimal
decisions $A^*$ for almost all samples $\nX$.

\begin{proposition}~\label{prop:2}
  We have
  \begin{align}
    \label{eq:10}
    \left\{\anv(\nX) \subseteq A^*\right\}~P_{0}-a.s. 
    \text{ as } n \to \infty.
  \end{align}
\end{proposition}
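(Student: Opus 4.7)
The plan is a standard argmin-consistency argument built on Proposition~\ref{prop:1}, Corollary~\ref{cor:1}, the continuity of $H_0$ in $a$ (from Assumption~\ref{assume:3}(1)), and the compactness of $\sA$ (Assumption~\ref{assume:1}). First, I would fix a sample path in the probability-one event on which the uniform convergence $\sup_{a\in\sA}|H_{q^*}(a)-H_0(a)|\to 0$ holds; all limits below refer to this path. Second, I would consider an arbitrary selection $\hat a_n \in \anv(\nX)$ and, invoking compactness of $\sA$, extract a subsequence $\hat a_{n_k}\to a^\infty \in \sA$. The goal is to show $a^\infty \in A^*$; since $\hat a_n$ was an arbitrary selection, this establishes that every limit point of $\anv(\nX)$ lies in $A^*$, which is the operative meaning of $\{\anv(\nX)\subseteq A^*\}$ as $n\to\infty$.

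Third, to show $a^\infty \in A^*$, I would compare $H_0(a^\infty)$ to $H_0(a)$ for arbitrary $a \in \sA$ via the telescoping decomposition
\begin{align*}
H_0(a^\infty) - H_0(a) &= \bigl[H_0(a^\infty) - H_0(\hat a_{n_k})\bigr] + \bigl[H_0(\hat a_{n_k}) - H_{q^*}(\hat a_{n_k})\bigr] \\
&\quad + \bigl[H_{q^*}(\hat a_{n_k}) - H_{q^*}(a)\bigr] + \bigl[H_{q^*}(a) - H_0(a)\bigr].
\end{align*}
The first bracket vanishes as $k\to\infty$ because $H_0(a)=G(a,\theta_0)$ is continuous in $a$ by Assumption~\ref{assume:3}(1) and $\hat a_{n_k}\to a^\infty$. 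The second and fourth brackets vanish by the uniform (hence pointwise) convergence established in Proposition~\ref{prop:1}. The third bracket is non-positive because $\hat a_{n_k}$ minimises $H_{q^*}$ over $\sA$. Letting $k\to\infty$ thus yields $H_0(a^\infty) \le H_0(a)$ for every $a\in\sA$, i.e., $a^\infty \in A^*$.

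The main obstacle, which is mild but worth being explicit about, is making sure that the almost-sure statement is uniform across all selections and subsequences. Since Proposition~\ref{prop:1} provides a single $P_0$-probability-one event on which the uniform convergence holds, the argument above can be carried out pathwise on that event, with no additional exceptional null sets introduced by the choice of $\{\hat a_n\}$ or of the convergent subsequence. Everything else (continuity of $H_0$, compactness of $\sA$, optimality of $\hat a_{n_k}$) is deterministic given the path, so the conclusion holds $P_0$-a.s.\ as stated.
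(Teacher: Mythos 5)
Your proof is correct, but it takes a genuinely different route from the paper's. The paper argues by contraposition with a global separation constant: it sets $\epsilon := \inf_{a \in \sA\setminus A^*} H_0(a) - V_0$ and uses Proposition~\ref{prop:1} to find $n_0$ such that for all $n \ge n_0$ every $a \notin A^*$ satisfies $H_{q^*}(a) > V_{q^*}$, hence is excluded from $\anv(\nX)$; this yields the exact inclusion $\anv(\nX) \subseteq A^*$ for all sufficiently large $n$. The catch is that this argument tacitly requires $\epsilon > 0$, i.e.\ a well-separated minimum; when $H_0$ is continuous and $A^*$ has points approachable from $\sA\setminus A^*$ (e.g.\ $H_0(a)=a^2$ on $[-1,1]$), the infimum equals $V_0$ and the paper's $\epsilon/2$ bound is vacuous. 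Your subsequence/limit-point argument (extract $\hat a_{n_k}\to a^\infty$ by compactness, telescope, use uniform convergence twice, continuity of $H_0$ once, and optimality of $\hat a_{n_k}$ once) is the standard argmax-theorem route and avoids any separation hypothesis, at the price of establishing the conclusion in the weaker (but, without well-separation, the only generally valid) sense that every limit point of every measurable selection from $\anv(\nX)$ lies in $A^*$. You are also right that no new null sets are introduced, since all the randomness is absorbed into the single event from Proposition~\ref{prop:1}. One cosmetic point: $H_{q^*}$ in your third and fourth brackets should be read as $H_{q^*(\theta|\tilde X_{n_k})}$, i.e.\ it changes with $k$; the uniform convergence of Proposition~\ref{prop:1} is exactly what lets you control it along the subsequence.
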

We use the uniform convergence of $H_{q^*}(a)$  to $H_0(a)$ and argue that any decision which is not in the true optimal  decision set $A^*$, must not exist in NVB approximate optimal decision set $\anv(\nX)$  for large enough $n$. Once again, we relegate the proof to the appendix. 
Consequently, it follows that NVB optimal actions are asymptotically oracle regret
minimizing:

\begin{corollary}
  For any ${\bf a} \in A^*$ and $a^{*} \in \anv(\nX))$,~
\(    
H_0(a^*) \to H_0({\bf a}) ~P_{0}-a.s.~\text{as}~ n\to\infty.
\)
\end{corollary}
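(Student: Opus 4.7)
The plan is to derive this corollary essentially for free from the preceding results; two routes are available, and both are short because all the analytic work has already been done in Propositions~\ref{prop:1}--\ref{prop:2} and Corollary~\ref{cor:1}. First, I would observe that for any ${\bf a} \in A^*$ the quantity $H_0({\bf a})$ equals the constant $V_0 := \min_{a \in \sA} H_0(a)$, since every element of $A^*$ attains the minimum of $H_0$. The claim therefore reduces to showing $H_0(a^*) \to V_0$ $P_0$-almost surely for any $a^* \in \anv(\nX)$.

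The most direct argument is a one-line appeal to Proposition~\ref{prop:2}. On the $P_0$-probability-one event where $\anv(\nX) \subseteq A^*$ eventually, there is a (sample-dependent) integer $N$ such that for every $n \geq N$ and every $a^* \in \anv(\nX)$ one has $a^* \in A^*$, so $H_0(a^*) = V_0 = H_0({\bf a})$ identically. Hence $H_0(a^*)$ is \emph{eventually equal} to $H_0({\bf a})$, which is strictly stronger than the claimed convergence.

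A more quantitative alternative, which bypasses the argmin inclusion and relies only on the function/value convergence established earlier, proceeds as follows. By optimality of $a^*$ we have $H_{q^*}(a^*) = V_{q^*}$, so the triangle inequality gives
\[ |H_0(a^*) - H_0({\bf a})| = |H_0(a^*) - V_0| \leq \sup_{a \in \sA} |H_0(a) - H_{q^*}(a)| + |V_{q^*} - V_0|. \]
Proposition~\ref{prop:1} forces the first term to zero and Corollary~\ref{cor:1} forces the second term to zero, both $P_0$-a.s. The main (and really only) obstacle is the measure-theoretic bookkeeping of unioning the exceptional null sets from the cited results into a single null set on which none of the prerequisites hold; since only finitely many results are invoked, this is entirely routine.
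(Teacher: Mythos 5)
Your first argument is exactly the route the paper intends (the proof is omitted there, attributed to Proposition~\ref{prop:2} and continuity of $G(\cdot,\theta_0)$), and it is correct; you rightly note that the set inclusion $\anv(\nX)\subseteq A^*$ makes $H_0(a^*)$ eventually \emph{equal} to $V_0=H_0(\mathbf{a})$, so continuity is not actually needed. Your second, quantitative route via Proposition~\ref{prop:1} and Corollary~\ref{cor:1} is also valid, but it is redundant given the first.
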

The result above is a straightforward implication of the continuity of $G(a,\theta_0)$ in $a$ and Proposition~\ref{prop:2} and therefore the proof is omitted. 

\subsection{Analysis of the LC decision rule}~\label{sec:lc} 
Now, recall from~\eqref{eq:reg-form} that the LC decision-rule is
\[
\alc(\nX) = \arg\min_{a \in \sA} ~\max_{q \in \cQ} -\scKL\left(q(\theta) \|
\pi(\theta|\nX) \right) + \int_\Theta q(\theta) \log G(a,\theta) d\theta.
\]
The next proposition shows that $\alc(\nX)$ is a subset of
the true optimal decision set $A^*$ in the large sample limit for almost
all sample sequences. We use similar ideas as used in Section~\ref{sec:vb}.

\begin{proposition}~\label{prop:6}
    We have
    \begin{align}
    \label{eq:14}
    \left\{ \alc(\nX) \subseteq A^* \right\}~P_{0}-a.s. ~\text{as}~n\to\infty.
    \end{align}
\end{proposition}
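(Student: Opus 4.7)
The strategy mirrors the proof of Proposition~\ref{prop:2}: establish uniform convergence of the objective $F^*(a;\nX) := \max_{q \in \cQ} \mathcal{F}(a,q;\nX)$ to a function of $a$ whose argmin is $A^*$, then deduce argmin convergence using compactness of $\sA$. The useful rewriting is $\mathcal{F}(a,q;\nX) = \log Z_a(\nX) - \scKL(q \| p_a)$, where $Z_a(\nX) := \mathbb{E}_{\pi(\cdot|\nX)}[G(a,\theta)]$ and $p_a(\theta) := G(a,\theta)\pi(\theta|\nX)/Z_a(\nX)$ is the normalized calibrated posterior, giving $F^*(a;\nX) = \log Z_a(\nX) - \min_{q \in \cQ} \scKL(q \| p_a)$.

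I would establish two facts: (i) $\log Z_a(\nX) \to \log G(a,\theta_0)$ uniformly over $a \in \sA$, $P_0$-a.s., and (ii) the variational gap $\min_{q \in \cQ} \scKL(q \| p_a)$ is asymptotically independent of $a$, in the sense that $\sup_{a,a' \in \sA} |\min_q \scKL(q \| p_a) - \min_q \scKL(q \| p_{a'})| \to 0$ $P_0$-a.s. Claim~(i) follows from posterior concentration at $\theta_0$ (implied by Assumptions~\ref{assume:2} and~\ref{assume:4}), Assumption~\ref{assume:3} (Lipschitz in $\theta$, continuity in $a$, uniform integrability), and compactness of $\sA$ (Assumption~\ref{assume:1}). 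Claim~(ii) rests on the identity
\[\scKL(q \| p_a) - \scKL(q \| p_{a'}) = \int q(\theta) \bigl[\log G(a',\theta) - \log G(a,\theta)\bigr] d\theta + \bigl[\log Z_a(\nX) - \log Z_{a'}(\nX)\bigr],\]
evaluated at $q$ equal to the LCVB posterior associated with $a'$ (and, symmetrically, with $a$): by Proposition~\ref{prop:3} this $q$ concentrates on $\theta_0$, so the first term converges to $\log G(a',\theta_0) - \log G(a,\theta_0)$, while claim~(i) sends the second term to $\log G(a,\theta_0) - \log G(a',\theta_0)$, summing to zero. Optimality of the LCVB posterior then sandwiches $\min_q \scKL(q \| p_a) - \min_q \scKL(q \| p_{a'})$ between the two evaluations, yielding~(ii).

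With (i) and (ii) in hand, the conclusion parallels Proposition~\ref{prop:2}. For any $\tilde a_n \in \alc(\nX)$ and $a^* \in A^*$, optimality gives $F^*(\tilde a_n; \nX) \leq F^*(a^*; \nX)$, which rearranges to
\[\log Z_{\tilde a_n}(\nX) - \log Z_{a^*}(\nX) \leq \min_{q \in \cQ} \scKL(q \| p_{\tilde a_n}) - \min_{q \in \cQ} \scKL(q \| p_{a^*}) = o(1) \quad P_0\text{-a.s.}\]
Combined with~(i), this gives $\log G(\tilde a_n, \theta_0) \leq \log G(a^*, \theta_0) + o(1)$. Compactness of $\sA$ and continuity of $G(\cdot,\theta_0)$ then yield $\tilde a_\infty \in A^*$ along any convergent subsequence $\tilde a_n \to \tilde a_\infty$, proving $\alc(\nX) \subseteq A^*$ for all large $n$.

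The main obstacle is claim~(ii): elevating Proposition~\ref{prop:3}'s pointwise weak convergence of the LCVB posterior to a statement about how the variational gap varies with $a$, done uniformly over the compact set $\sA$. The underlying intuition is that on the $\sqrt{n}$-effective support of the posterior (prescribed by the LAN expansion, Assumption~\ref{assume:4}) the factor $G(a,\theta)/G(a,\theta_0)$ equals $1 + O(\|\theta - \theta_0\|)$ uniformly in $a$ by Assumption~\ref{assume:3}(1) and compactness of $\sA$, so the calibrated posterior $p_a$ becomes asymptotically indistinguishable from $\pi(\cdot|\nX)$ uniformly in $a$, which is what makes both the LCVB optimizer and the variational gap insensitive to $a$ in the limit.
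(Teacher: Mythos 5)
Your proposal is correct in substance but follows a genuinely different route from the paper. The paper's proof freezes the variational distribution: for a fixed $\bar a$ it uses Proposition~\ref{prop:3} plus a Proposition~\ref{prop:1}-style argument to get $\sup_{a}\left|\int\log G(a,\theta)\,\qlcb\,d\theta - \log G(a,\theta_0)\right|\to 0$, concludes that $\arg\min_a \int\log G(a,\theta)\,\qlcb\,d\theta\subseteq A^*$, and then asserts this transfers to the actual min--max rule $\alc(\nX)$ "since the result is true for any $\bar a$"; that transfer silently requires a minimizer $a^{**}$ of $a\mapsto\max_{q}\mathcal F(a,q;\nX)$ to also minimize $a\mapsto\mathcal F(a,q^*_{a^{**}}(\theta|\nX);\nX)$, which is not justified in the absence of a saddle point. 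Your decomposition $\max_q\mathcal F(a,q;\nX)=\log Z_a(\nX)-\min_q\scKL(q\|p_a)$ attacks exactly that coupling head-on: claim (i) is the paper's display~\eqref{eq:7} combined with the standing assumption $\inf_{a,\theta}G>0$, and claim (ii) shows the variational gap washes out of the comparison between decisions, so the argmin of the LC objective tracks the argmin of $\log G(\cdot,\theta_0)$. Two remarks. First, the ``main obstacle'' you flag --- uniformity of (ii) over both $a$ and $a'$ --- is stronger than what your own final step uses: there you only compare $a=\tilde a_n$ (varying) against the single fixed $a'=a^*$, and the one-sided bound $\min_q\scKL(q\|p_{\tilde a_n})-\min_q\scKL(q\|p_{a^*})\le \int q^*_{a^*}(\theta|\nX)\left[\log G(a^*,\theta)-\log G(\tilde a_n,\theta)\right]d\theta+\log Z_{\tilde a_n}(\nX)-\log Z_{a^*}(\nX)$ needs only convergence uniform in the first argument for the one fixed calibrating decision $a^*$ --- precisely the uniform limit the paper establishes for fixed $\bar a$. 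So the gap you worry about can be avoided by running the comparison asymmetrically. Second, your concluding step (subsequential limit points of $\tilde a_n$ lie in $A^*$, hence $\alc(\nX)\subseteq A^*$ eventually) is at the same level of rigor as the paper's Proposition~\ref{prop:2}, which implicitly takes $\inf_{a\in\mathcal A\setminus A^*}H_0(a)-V_0>0$. Net assessment: your route is more careful exactly where the paper is terse, at the modest cost of introducing the normalized calibrated posterior $p_a$ and its log-normalizer $\log Z_a(\nX)$.
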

The proof is in the appendix. This result naturally implies that the
loss-calibrated VB optimal decisions are also oracle regret minimizing

\begin{corollary}
    For any $\mathbf a \in A^*$ and $a^{**} \in \alc(\nX)$,~
    \(
    H_0(a^{**}) \to H_0({\bf a}) ~\text{as}~n\to\infty.
    \)
\end{corollary}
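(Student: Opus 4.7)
The plan is to mirror the NVB argument of Proposition~\ref{prop:2}: establish that the outer LCVB objective
\[
\tilde F(a;\nX) := \max_{q \in \cQ} \mathcal F(a,q;\nX)
\]
(whose minimizers over $\sA$ define $\alc(\nX)$) converges, $P_0$-a.s.\ and uniformly in $a \in \sA$, to $\log G(a,\theta_0)$. Given uniform convergence on the compact set $\sA$, a routine compactness-plus-continuity argument identical to the one used in Proposition~\ref{prop:2} then forces every accumulation point of $\alc(\nX)$ to lie in $A^* = \arg\min_{a}G(a,\theta_0)$, which is exactly the claim.

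The starting point is the identity
\[
\mathcal F(a,q;\nX) = \log Z_n(a) - \scKL\!\left(q\,\big\|\,\nu_a^n\right), \qquad \nu_a^n(\theta) := \frac{G(a,\theta)\pi(\theta|\nX)}{Z_n(a)}, \quad Z_n(a) := \int_\Theta G(a,\theta)\pi(\theta|\nX)\,d\theta,
\]
so that $\tilde F(a;\nX) = \log Z_n(a) - \scKL(\qlc \| \nu_a^n)$. The first term is the easy one: since $Z_n(a) = H_{\pi(\cdot|\nX)}(a)$, equation~\eqref{eq:7} combined with Assumption~\ref{assume:3} gives $Z_n(a) \to G(a,\theta_0)$ $P_0$-a.s.\ uniformly over $a \in \sA$, and under the blanket assumption $\inf_{a,\theta}G(a,\theta) > 0$ we may pass the logarithm through to obtain $\log Z_n(a) \to \log G(a,\theta_0)$ uniformly.

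The delicate piece is the variational gap $\scKL(\qlc \| \nu_a^n)$. By Proposition~\ref{prop:3} we have $\qlc \Rightarrow \delta_{\theta_0}$ for each fixed $a$, and since loss calibration only re-weights the prior, a Bernstein--von Mises argument applied to the calibrated prior $G(a,\cdot)\pi(\cdot)$ likewise gives $\nu_a^n \Rightarrow \delta_{\theta_0}$ at the $\sqrt{n}$-rate. Using the sequence $\{q_n\} \subset \cQ$ from Assumption~\ref{assume:5} as an upper bound for the KL (since $\qlc$ is the KL-minimizer), rescaling by $\sqrt{n}$ and invoking the LAN expansion of Assumption~\ref{assume:4}, the gap reduces to a KL on the rescaled scale whose $a$-dependence enters only through a bounded, continuous factor that washes out in the limit; continuity of $G(a,\cdot)$ in $a$ and compactness of $\sA$ then promote the pointwise limit to a uniform one. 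With uniform convergence of $\tilde F(\cdot;\nX)$ to $\log G(\cdot,\theta_0)$ in hand, the conclusion is immediate: for any $a \in \alc(\nX)$ and any $\mathbf a \in A^*$, $\tilde F(a;\nX) \le \tilde F(\mathbf a;\nX)$, so passing to the limit yields $\log G(a,\theta_0) \le \log G(\mathbf a,\theta_0)$, i.e.\ $a \in A^*$. The principal obstacle is this gap term: $\scKL(\qlc \| \nu_a^n)$ need not vanish in general, and the hard technical content is to show that any surviving residual is constant in $a$ to leading order (so that it cannot shift the minimizers); Assumption~\ref{assume:5}(3) on finite rescaled entropy together with the LAN approximation are what do the heavy lifting here.
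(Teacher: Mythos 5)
The paper's intended proof of this corollary is a one-line consequence of Proposition~\ref{prop:6}: $P_0$-a.s.\ there is an $n_0$ such that for all $n \ge n_0$ every $a^{**} \in \alc(\nX)$ lies in $A^*$, and since $H_0$ is constant (equal to $V_0$) on $A^*$ by definition of $A^*$, we get $H_0(a^{**}) = H_0(\mathbf a)$ exactly for all such $n$. You never invoke Proposition~\ref{prop:6}; instead you set out to re-derive decision consistency from scratch by analyzing the outer objective $\tilde F(a;\nX) = \max_{q\in\cQ}\mathcal F(a,q;\nX) = \log Z_n(a) - \scKL(\qlc \,\|\, \nu_a^n)$, where $Z_n(a)=\int_\Theta G(a,\theta)\pi(\theta|\nX)\,d\theta$ and $\nu_a^n \propto G(a,\cdot)\,\pi(\cdot|\nX)$. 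That decomposition is correct, and the treatment of $\log Z_n(a)$ via \eqref{eq:7} is fine, but the argument has a genuine hole exactly where you announce the ``heavy lifting'': you assert, without proof, that the variational gap $\scKL(\qlc\,\|\,\nu_a^n)$ converges, uniformly in $a$, to a limit that does not depend on $a$. Nothing in Assumptions~\ref{assume:3}--\ref{assume:5} delivers this directly; establishing it would require redoing the LAN/rescaling computation of Lemma~\ref{lemma:lem1} for the gap term, verifying that the $a$-dependent contributions (the $-\int_\Theta\log G(a,\theta)\,q(\theta)\,d\theta$ term and the $\log\int G(a,\theta)\pi(\theta)\mathcal N(\theta;\hat\theta_n,(nI(\theta_0))^{-1})\,d\theta$ normalizer) cancel in the limit, and then a separate argument for uniformity over $\sA$. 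Since this is precisely the nontrivial content of your route, leaving it as an assertion means the proof is incomplete.

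It is worth noting that your decomposition, if carried through, would actually yield a cleaner proof of Proposition~\ref{prop:6} than the paper's own: the paper's argument freezes $q$ at $\qlcb$ and minimizes only $\int_\Theta\log G(a,\theta)\qlcb\,d\theta$ over $a$, quietly discarding the $a$-dependence of both the KL term and of the inner maximizer, whereas your identity makes that dependence explicit and isolates it in the gap term. But for the corollary as stated, the complete and economical proof is simply to cite Proposition~\ref{prop:6} and observe that $H_0 \equiv V_0$ on $A^*$.
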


\section{Numerical Example}\label{sec:inference}

In this section we present a simulation study of a canonical optimal decision making problem called the newsvendor problem. This problem has been extensively studied in the inventory management literature~\cite{bertsimas2005data,levi2015data,Sc1960}. Recall that the newsvendor loss function is defined as $$\ell( a,\xi) := h(  a-\xi)^+ + b(\xi-  a)^+$$
where $\xi  \in  [0,\infty)$ is the random demand, 
$a$ is the inventory or decision variable, and $h$ and $b$ are given positive constants. We assume that the  decision variable  $a$ take values in a compact decision space $\mathcal{A}$. 
We also assume that  the the random demand $\xi$ is exponentially distributed with 
unknown rate parameter $\theta_0 \in (0,\infty)$. The model risk can easily be derived as 
\begin{align}\label{eq:news-loss}
G(a,\theta)= \mathbb{E}_{P_{\theta}}[\ell(a,\xi)]= ha- \frac{h}{\theta} + (b+h) \frac{e^{-a \theta}}{\theta},
\end{align}
which is convex in $a$. Let $\nX := \{\xi_1, \xi_2 \ldots \xi_n\}$ be $n$ observations of the random demand, assumed to be independent and identically distributed. 
Next, we posit a non-conjugate inverse-gamma prior distribution over the rate parameter $\theta$ with shape and rate parameter $\alpha$ and $\beta$ respectively.
Finally, we run a simulation experiment using the newsvendor model described above  for a fix $\theta_0 = 0.68$, $ b=0.1, \alpha =1 \text{, and } \beta=4.1$. We use naive VB and LCVB algorithms to obtain the respective optimal decision $\anv$ and $\alc$ for 9 different values of $h \in \{0.001,0.002,\ldots 0.009\}$ and repeat the experiment over 1000 sample paths. In Figure~1, we plot the $50^{th}$ quantile of the $|\mathtt{a^*} - \mathtt{a_0^*} |$, where $\mathtt{a^*} \in \{\anv,\alc\}$ for this model. Observe that the optimality gap decreases quite rapidly for both the naive VB (left) and the loss-calibrated VB (right) methods.

\begin{figure}
	\centering
		\begin{subfigure}[b]{0.45\textwidth}
			\includegraphics[width=0.9\textwidth,height=0.75\textwidth]{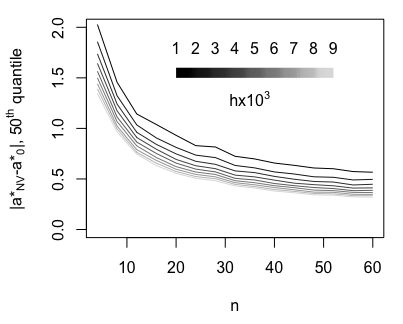}
		\end{subfigure}
		\begin{subfigure}[b]{0.45\textwidth}
			\includegraphics[width=0.9\textwidth,height=0.75\textwidth]{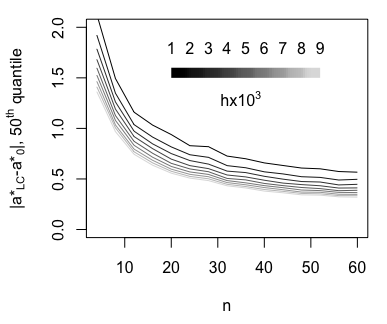}
		\end{subfigure}
{\caption{Optimality gap in decisions (the $50^{th}$ quantile over 1000 sample paths) against the number of samples ($n$) for $\anv$ (left) and $\alc$ (right).}}\label{fig1}
\end{figure}

\bibliographystyle{plain} 
\bibliography{refs.bib} 

\appendix
\section{Proof of Proposition~\ref{prop:3}}

\begin{lemma}\label{lem:MS}
     For any risk function $G(a,\theta)$ that satisfies Assumption~\ref{assume:3} and a given sequence of distributions $\{q_n(\theta)\}$ that converges weakly to any distribution $q(\theta)$ other than the Dirac-delta distribution at $\theta_0$, the $\scKL  \left(q_n(\theta) \bigg\| \frac{G(a,\theta) \pi(\theta)
         p(\nX|\mathbf \theta)} {\int_{\Theta} G(a,\theta) \pi(\theta) p(
         \nX |\theta) d\theta}\right)$ is undefined in the limit as $n \to \infty~P_0-a.s$.
    \end{lemma}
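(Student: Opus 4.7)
The plan is to show that the KL divergence diverges to $+\infty$ as $n \to \infty$, which is presumably what the authors mean by ``undefined in the limit.'' The heuristic is that the reference measure $\tilde\pi_n(\theta) \propto G(a,\theta)\pi(\theta)p(\nX|\theta)$ behaves, for large $n$, like a distribution concentrating near $\theta_0$ (since $G(a,\theta)\pi(\theta)$ just reweights the prior, the usual posterior concentration drives this). If $q_n$ weakly converges to some $q \neq \delta_{\theta_0}$, then $q_n$ and $\tilde\pi_n$ become increasingly singular, forcing the KL to blow up.

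First I would split the KL divergence into its constituent terms:
\begin{align*}
\scKL\bigl(q_n \bigm\| \tilde\pi_n\bigr)
  &= -H(q_n) - \int q_n(\theta) \log[G(a,\theta)\pi(\theta)]\, d\theta \\
  &\quad - \int q_n(\theta) \log p(\nX|\theta)\, d\theta + \log Z_n,
\end{align*}
where $Z_n := \int_\Theta G(a,\theta)\pi(\theta) p(\nX|\theta)\, d\theta$. The entropy term $-H(q_n)$ and the $\log G(a,\theta) + \log\pi(\theta)$ term are $O(1)$ by Assumptions~\ref{assume:2} and~\ref{assume:3} together with weak convergence of $q_n$ to $q$; the work is entirely in the two $n$-dependent terms.

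Next I would analyze $\log Z_n$ via a Laplace-type argument built on the LAN condition in Assumption~\ref{assume:4}. Writing $Z_n = p(\nX|\theta_0) \int G(a,\theta)\pi(\theta)\, e^{\log p(\nX|\theta) - \log p(\nX|\theta_0)}\, d\theta$, the LAN expansion together with the fact that $G(a,\theta)\pi(\theta)$ is a bounded, continuous reweighting of the prior around $\theta_0$ yields $\log Z_n = \log p(\nX|\theta_0) + O(\log n)$ $P_0$-a.s. On the other hand, for the likelihood term I would use the strong law of large numbers applied to $\frac{1}{n}[\log p(\nX|\theta_0) - \log p(\nX|\theta)]$, which converges $P_0$-a.s. to $\scKL(P_{\theta_0}\|P_\theta) > 0$ for every $\theta \neq \theta_0$ (identifiability being implicit in the LAN setup). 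Combining,
\begin{equation*}
-\int q_n(\theta) \log p(\nX|\theta)\, d\theta + \log Z_n
  = n \int q_n(\theta)\, D_n(\theta)\, d\theta + O(\log n),
\end{equation*}
where $D_n(\theta) := \frac{1}{n}\log\frac{p(\nX|\theta_0)}{p(\nX|\theta)} \to \scKL(P_{\theta_0}\|P_\theta)$ pointwise $P_0$-a.s.

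Finally, since $q_n \Rightarrow q \neq \delta_{\theta_0}$, the limiting $q$ places positive mass on $\{\theta : \theta \neq \theta_0\}$, where $\scKL(P_{\theta_0}\|P_\theta) > 0$. Passing to the limit under the integral (using, e.g., Fatou together with the lower semicontinuity of $D_n$ and the weak convergence of $q_n$), we conclude $\liminf_{n \to \infty} \int q_n(\theta) D_n(\theta)\, d\theta > 0$, so the dominant term is of order $n$ and $\scKL(q_n \| \tilde\pi_n) \to +\infty$ $P_0$-a.s. The main obstacle is the rigorous exchange of the $q_n$-expectation with the $P_0$-almost sure limit for $D_n(\theta)$: a clean way is to invoke the LAN uniform control on compact sets together with the tightness implied by weak convergence of $\{q_n\}$, splitting the integral over a large compact set and its complement and using the boundedness of the prior and integrability of $G(a,\theta)$ under $q \in \cQ$ from Assumption~\ref{assume:3}(2).
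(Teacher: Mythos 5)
Your strategy---show that the likelihood-ratio term contributes $+\,n\int q_n(\theta)D_n(\theta)\,d\theta$ with strictly positive $\liminf$ while everything else is lower order---has two genuine gaps. First, the claim that $-H(q_n)=\int q_n\log q_n\,d\theta$ is $O(1)$ ``by weak convergence of $q_n$ to $q$'' is false: weak convergence controls neither the differential entropy nor the second moments of $q_n$. Since $\Theta$ is not assumed compact, one can take $q_n=(1-\epsilon_n)q+\epsilon_n u_n$ with $u_n$ uniform on an interval of length $e^{n^3}$ and $\epsilon_n=1/n$; then $q_n\Rightarrow q$ but $\int q_n\log q_n\to-\infty$ at rate $n^2$, which swamps the order-$n$ lower bound you extract from the likelihood term, so your bookkeeping does not force the sum to $+\infty$. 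Second, the step $\liminf_n\int q_n(\theta)D_n(\theta)\,d\theta>0$ interchanges integration against a moving measure with a $P_0$-a.s.\ pointwise limit whose exceptional null set depends on $\theta$; moreover $D_n(\theta)$ is not pointwise nonnegative for finite $n$ (only its limit is), so Fatou does not apply directly, and the LAN condition of Assumption~\ref{assume:4} only controls the log-likelihood on $n^{-1/2}$-neighborhoods of $\theta_0$---it gives no uniform control of $D_n$ at fixed distance from $\theta_0$, which is exactly where you need positivity. You would need a uniform SLLN over compacta plus an explicit identifiability assumption ($\scKL(P_{\theta_0}\|P_\theta)>0$ for $\theta\ne\theta_0$), neither of which is among the paper's hypotheses.

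The paper's proof avoids both issues by never separating the entropy from the rest of the divergence. Writing $\tilde\pi_n(\theta)\propto G(a,\theta)\pi(\theta)p(\nX|\theta)$, it decomposes
$\scKL(q_n\|\tilde\pi_n)=\scKL(q_n\|\pi(\theta|\nX))-\int_\Theta\log G(a,\theta)\,q_n(\theta)\,d\theta-\log\int_\Theta G(a,\theta)\pi(\theta|\nX)\,d\theta$,
bounds the two correction terms below by $-\int G(a,\theta)q_n\,d\theta-\int G(a,\theta)\pi(\theta|\nX)\,d\theta$ using $\log x<x$ (both finite by the uniform integrability in Assumption~\ref{assume:3}), and then invokes two facts: the classical a.s.\ weak consistency of the exact posterior, $\pi(\cdot|\nX)\Rightarrow\delta_{\theta_0}$, and the joint lower semicontinuity of $\scKL$ in the weak topology (Posner), which yield $\liminf_n\scKL(q_n\|\pi(\theta|\nX))\ge\scKL(q\|\delta_{\theta_0})=\infty$ since $q\ne\delta_{\theta_0}$ is not absolutely continuous with respect to $\delta_{\theta_0}$. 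If you want to salvage your route, the essential repair is the same idea: keep the entropy, the likelihood term, and the normalizer grouped as a single KL divergence against the exact posterior (nonnegative and lower semicontinuous), and only peel off the $G$-dependent corrections, which are the ones your assumptions actually control.
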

\begin{proof}
    Using the  definition of the posterior distribution $\pi(\theta|\nX) = \frac{ \pi(\theta)
        p(\nX|\mathbf \theta)} {\int_{\Theta}  \pi(\theta) p(
        \nX |\theta) d\theta} $, first observe that 
    \begin{align}
    \nonumber
        \scKL & \left(q_n(\theta) \bigg\| \frac{G(a,\theta) \pi(\theta)
            p(\nX|\mathbf \theta)} {\int_{\Theta} G(a,\theta) \pi(\theta) p(
            \nX |\theta) d\theta}\right) 
        \\
        \nonumber
        &= \scKL \left(q_n(\theta)\|\pi(\theta|\nX) \right) - \int_{\Theta} \log(G(a,\theta)) q_n(\theta) d\theta - \log \int_{\Theta} G(a,\theta) \pi(\theta|\nX) d\theta.
        \\
        &\geq \scKL \left(q_n(\theta)\|\pi(\theta|\nX) \right) - \int_{\Theta} G(a,\theta) q_n(\theta) d\theta -  \int_{\Theta} G(a,\theta) \pi(\theta|\nX) d\theta,
        \end{align}
     where the last inequality uses the fact that $\log x < x$.
    Now taking the $\liminf$ on either side, we have 
    \begin{align}
    \nonumber
   &\liminf_{n \to \infty} \scKL  \left(q_n(\theta) \bigg\| \frac{G(a,\theta) \pi(\theta)
        p(\nX|\mathbf \theta)} {\int_{\Theta} G(a,\theta) \pi(\theta) p(
        \nX |\theta) d\theta}\right) 
    \\
    &\geq \liminf_{n \to \infty} \scKL \left(q_n(\theta)\|\pi(\theta|\nX) \right) - \limsup_{n \to \infty}  \int_{\Theta} G(a,\theta) q_n(\theta) d\theta - \limsup_{n \to \infty} \int_{\Theta} G(a,\theta) \pi(\theta|\nX) d\theta.
    \label{eq:A1}
    \end{align}
    Recall that the posterior distribution $\pi(\theta|\nX)  $ converges weakly to $\delta_{\theta_0}$ $P_0-a.s$. Due to \cite[Theorem 16]{PosnerE.1975Rcsf} we know that $\scKL(q(\theta)\|p(\theta))$ is a lower semi-continuous function of the pair $(q(\theta),p(\theta))$ in the weak topology on the space of probability measures. Using lower semi-continuity, it follows that the first term in~\eqref{eq:A1} satisfies
        \begin{align} \label{eq:A2}
        \liminf_{n \to \infty} \scKL  \left(q_n(\theta) \| \pi(\theta|\nX)\right) > \scKL( q(\theta) \| \delta_{\theta_0}) = \infty, 
        \end{align}
        where the last equality is by definition of the $\scKL$ divergence, 
        since $q(\theta) \neq \delta_{\theta_0}$ (as $q_n(\theta)$ does not weakly converge to $\delta_{\theta_0}$) and therefore it is not absolutely continuous with respect to $\delta_{\theta_0}$. Since the last two terms are finite due to Assumption~\ref{assume:3}, we have shown  that for any sequence of distribution $\{q_n(\theta)\}$ that converges weakly to any distribution $q(\theta)\neq \delta_{\theta_0}$, the $\scKL  \left(q_n(\theta) \bigg\| \frac{G(a,\theta) \pi(\theta)
           p(\nX|\mathbf \theta)} {\int_{\Theta} G(a,\theta) \pi(\theta) p(
           \nX |\theta) d\theta}\right)$ diverges in the limit as $n \to \infty~P_0-a.s$.
    \end{proof}

\begin{lemma}\label{lemma:lem0}
    Let $\{ K_n \} \subseteq \Theta$ be a sequence of compact balls  such that for all $n\geq 1$, $\theta_0 \in K_n$ and  $K_n \to \Theta$ as $n \to \infty$. Then, under Assumption~\ref{assume:1} and for any $\delta>0$, the sequence of random variables $\left\{ \int_{\Theta \backslash K_n} \pi(\theta)G(a,\theta) \left(\frac{p(\nX|\theta)}{p(\nX|\theta_0)}\right) d\theta \right\}$ is of order $o_{P^n_{0}}(1)$; that is  \[ \lim_{n \to \infty} P_0^n \left( \int_{\Theta \backslash K_n} \pi(\theta) G(a,\theta) \left(\frac{p(\nX|\theta)}{p(\nX|\theta_0)}\right) d\theta >\delta \right) =0.  \]
\end{lemma}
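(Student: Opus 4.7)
The plan is to reduce the probability statement to a deterministic tail integral by combining Markov's inequality, Fubini's theorem, and the unit-mean property of the likelihood ratio under the true sampling distribution, and then use dominated convergence to send the tail integral to zero as $K_n \uparrow \Theta$. The non-negativity of the integrand (which we may take for granted since Section~\ref{sec:lc1} assumes $\inf_{a,\theta} G(a,\theta) > 0$) is what allows Markov and Fubini to apply cleanly.

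Concretely, first I would fix $a \in \sA$ and $\delta > 0$ and write
\begin{align*}
P_0^n\!\left(\int_{\Theta\setminus K_n}\pi(\theta)G(a,\theta)\frac{p(\nX|\theta)}{p(\nX|\theta_0)}d\theta>\delta\right) \leq \frac{1}{\delta}\,\mathbb{E}_{P_0^n}\!\left[\int_{\Theta\setminus K_n}\pi(\theta)G(a,\theta)\frac{p(\nX|\theta)}{p(\nX|\theta_0)}d\theta\right].
\end{align*}
Then I would apply Tonelli to interchange the expectation and the $\theta$-integral, and use the identity
\begin{align*}
\mathbb{E}_{P_0^n}\!\left[\frac{p(\nX|\theta)}{p(\nX|\theta_0)}\right] = \int p(\nX|\theta)\,d\nX = 1,
\end{align*}
which collapses the right-hand side to $\frac{1}{\delta}\int_{\Theta\setminus K_n}\pi(\theta)G(a,\theta)\,d\theta$. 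Since $K_n\uparrow\Theta$ we have $\mathbf{1}_{\Theta\setminus K_n}(\theta)\downarrow 0$ pointwise, so by dominated convergence (with dominating function $\pi(\theta)G(a,\theta)$) this deterministic integral converges to $0$, giving the claimed $o_{P_0^n}(1)$ bound.

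The main obstacle is justifying that $\pi(\theta)G(a,\theta)$ is integrable on $\Theta$, which is needed for the dominated convergence step. Assumption~\ref{assume:3}(2) gives uniform integrability of $G(a,\cdot)$ against any $q \in \cQ$, and Assumption~\ref{assume:2} gives $\pi<M$ with the prior a probability density; together with the locally Lipschitz behaviour of $G(a,\cdot)$ (Assumption~\ref{assume:3}(1)), these should combine to yield $\int_\Theta \pi(\theta)G(a,\theta)\,d\theta<\infty$, but this step hinges on treating the prior as an admissible variational distribution (or on supplementing the stated hypotheses with the integrability of $G(a,\cdot)$ against $\pi$). Once that integrability is in hand, the proof is a short chain of standard measure-theoretic steps, and I would present it in precisely that order: Markov, Tonelli, likelihood-ratio normalization, dominated convergence.
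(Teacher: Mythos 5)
Your proof follows essentially the same route as the paper's: Markov's inequality, Fubini/Tonelli, the bound $\mathbb{E}_{P_0^n}\left[p(\nX|\theta)/p(\nX|\theta_0)\right]\le 1$, and a limit theorem to send the deterministic tail integral $\int_{\Theta\setminus K_n}\pi(\theta)G(a,\theta)\,d\theta$ to zero (the paper invokes monotone convergence where you use dominated convergence, an immaterial difference since both hinge on the integrability of $\pi(\theta)G(a,\theta)$ over $\Theta$, which you correctly flag as the one hypothesis the stated assumptions do not quite deliver). No substantive difference in approach.
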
 
\begin{proof}
    Using Markov's inequality, it follows that,
    \begin{align}
    P_0^n \left( \int_{\Theta \backslash K_n} \pi(\theta) G(a,\theta) \left(\frac{p(\nX|\theta)}{p(\nX|\theta_0)}\right) d\theta >\delta \right) \leq \frac{1}{\delta} \bbE_{P_0^n}\left[ \int_{\Theta \backslash K_n} \pi(\theta) G(a,\theta) \left(\frac{p(\nX|\theta)}{p(\nX|\theta_0)}\right) d\theta \right].
    \end{align}
    Next, using Fubini's Theorem in the RHS above and then the fact  that $\bbE_{P_0^n}\left[  \left(\frac{p(\nX|\theta)}{p(\nX|\theta_0)}\right)  \right]\leq 1$, observe that 
    \begin{align}
    P_0^n \left( \int_{\Theta \backslash K_n} \pi(\theta) G(a,\theta) \left(\frac{p(\nX|\theta)}{p(\nX|\theta_0)}\right) d\theta >\delta \right) \leq \frac{1}{\delta}  \int_{\Theta \backslash K_n} \pi(\theta) G(a,\theta)  d\theta.
    \label{eq:eq21}
    \end{align}
    
    Since $\theta_0 \notin \Theta\backslash K_n$ for all $n \geq 1$ and $\Theta\backslash K_n \to \emptyset$ as $n \to \infty$, $\mathbb{1}_{\Theta \backslash K_n} G(a,\theta)$ is monotonic, and  therefore using the monotone convergence theorem,  $\int_{\Theta \backslash K_n} G(a,\theta) \pi(\theta) d\theta \to 0$ as $n \to \infty$. Hence, taking limits on either  side of~\eqref{eq:eq21}  the result follows.
\end{proof}

Next, we show that for fixed $a\in \sA$, the $\scKL$ divergence between the LC approximate posterior $\qlc$ and the rescaled posterior  $\frac{G(a,\theta) \pi(\theta)
    p(\nX|\mathbf \theta)} {\int_{\Theta} G(a,\theta) \pi(\theta) p(
    \nX |\theta) d\theta} $ is finite in the limit. Also, the following lemma uses similar proof techniques as used in~\cite{WaBl2017}.
\begin{lemma}\label{lemma:lem1}
Fix $a \in \sA$. Then, under Assumptions~\ref{assume:2},~\ref{assume:3},~\ref{assume:4}, and ~\ref{assume:5},
	\begin{align*}
		 \limsup_{n \to \infty}  \min_{q \in \cQ} \scKL  \left(q(\theta) \bigg\| \frac{G(a,\theta) \pi(\theta)
		 	p(\nX|\mathbf \theta)} {\int_{\Theta} G(a,\theta) \pi(\theta) p(
		 	\nX |\theta) d\theta}\right) < \infty . 
	\end{align*}
Furthermore, the LC variational posterior $\qlc$ can converge only at the rate of $\sqrt{n}$. 
\end{lemma}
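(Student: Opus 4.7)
The plan is twofold: for the $\limsup$ bound I would evaluate the KL divergence at a convenient candidate drawn from Assumption~\ref{assume:5}(2), and for the rate statement I would argue by contradiction using the same decomposition. Both parts rely on the LAN expansion of Assumption~\ref{assume:4} together with a Laplace-type expansion for the normalizing constant.

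Writing $Z_n := \int_{\Theta} G(a,\theta) \pi(\theta) p(\nX|\theta) d\theta$ and letting $\{q_n\} \subset \cQ$ be the sequence guaranteed by Assumption~\ref{assume:5}(2) with mean $\hat{\theta}_n$ and rate $\sqrt{n}$, I would first bound $\min_{q \in \cQ} \scKL(\cdot)$ from above by $\scKL(q_n \| \cdot)$ and decompose
\begin{align*}
\scKL\!\left(q_n(\theta) \,\bigg\|\, \frac{G(a,\theta)\pi(\theta) p(\nX|\theta)}{Z_n}\right)
&= -H(q_n) + \log Z_n - \bbE_{q_n}[\log G(a,\theta)] \\
&\quad - \bbE_{q_n}[\log \pi(\theta)] - \bbE_{q_n}[\log p(\nX|\theta)].
\end{align*}
I would then analyze each of the five pieces on the rescaled scale $h = \sqrt{n}(\theta - \hat\theta_n)$. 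The negative entropy becomes $\tfrac{d}{2}\log n - H(\tilde q_n)$, with $H(\tilde q_n)$ finite by Assumption~\ref{assume:5}(3); the risk and prior expectations converge to $\log G(a,\theta_0)$ and $\log \pi(\theta_0)$ respectively, by concentration of $q_n$ at $\theta_0$ together with continuity and the integrability pieces of Assumptions~\ref{assume:2}--\ref{assume:3}. For the expected log-likelihood, I would combine the LAN expansion at $\theta_0$ with its evaluation at $\hat\theta_n$, using $\Delta_{n,\theta_0} = \sqrt{n}(\hat\theta_n - \theta_0)$, to obtain the cleaner identity $\log p(\nX|\theta) - \log p(\nX|\hat\theta_n) = -\tfrac{1}{2} h^T I(\theta_0) h + o_{P_0}(1)$ uniformly on compact $h$-balls; integrating against $\tilde q_n$ and using its $O(1)$ second moment yields $\log p(\nX|\hat\theta_n) - \tfrac{1}{2}\operatorname{tr}(I(\theta_0)\operatorname{Var}_{\tilde q_n}(h)) + o_{P_0}(1)$. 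Finally, for $\log Z_n$ I would restrict the integral to a shrinking ball around $\theta_0$ using Lemma~\ref{lemma:lem0}, rescale by $\sqrt{n}$, and apply the same quadratic to obtain the Laplace expansion $\log Z_n = \log p(\nX|\hat\theta_n) + \log G(a,\hat\theta_n) + \log \pi(\hat\theta_n) - \tfrac{d}{2}\log n + C + o_{P_0}(1)$, where $C$ depends only on $I(\theta_0)$. Summing, the $\pm \tfrac{d}{2}\log n$ terms cancel, the $\log p(\nX|\hat\theta_n)$ terms cancel, and the remainder is $O_{P_0}(1)$, which yields the first claim.

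For the rate statement I would argue by contradiction. Suppose $\qlc \Rightarrow \delta_{\theta_0}$ at some rate $\gamma_n$ with either $\gamma_n/\sqrt{n} \to 0$ or $\sqrt{n}/\gamma_n \to 0$. Repeating the decomposition with $\qlc$ in place of $q_n$ and rescaling by $\gamma_n$, the negative entropy now contributes $d\log\gamma_n - H(\tilde q^*_n)$, while $\log Z_n$ still contributes $-\tfrac{d}{2}\log n$ since $Z_n$ does not depend on the candidate. The LAN quadratic in $\bbE_{\qlc}[\log p(\nX|\theta)]$ now produces a trace term of order $n/\gamma_n^2$. When $\gamma_n \ll \sqrt{n}$ this trace term drives $-\bbE_{\qlc}[\log p(\nX|\theta)] \to +\infty$; when $\gamma_n \gg \sqrt{n}$, the entropy contribution $d\log\gamma_n$ outpaces $-\tfrac{d}{2}\log n$ from $\log Z_n$ and again $\scKL \to +\infty$. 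Either case contradicts the finite upper bound just established, forcing $\gamma_n$ to be of order $\sqrt{n}$.

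The main obstacle, as in most LAN-based arguments, is that the quadratic expansion is only uniform on compact $h$-balls and only in $P_0$-probability, whereas the integrals above run over all of $\Theta$. I would handle this by truncating both $\bbE_{q_n}[\log p(\nX|\theta)]$ and $\log Z_n$ to a slowly growing ball around $\hat\theta_n$: for $Z_n$ using Lemma~\ref{lemma:lem0} to discard the tail outside a sequence $K_n$, and for the $q_n$-expectation using the tight second-moment estimate in Assumption~\ref{assume:5}(2) together with a Markov-type bound to show the tail contribution is $o_{P_0}(1)$. Strengthening $O_{P_0}(1)$ to an almost-sure $\limsup$ bound, if that reading of the statement is intended, would follow by a standard subsequence argument along $P_0$-null events.
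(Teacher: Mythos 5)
Your proposal is correct and follows essentially the same route as the paper's proof: bound the minimum by evaluating at the $\sqrt{n}$-rate candidate sequence of Assumption~\ref{assume:5}(2), expand the expected log-likelihood via the LAN condition, handle the normalizing constant by combining Lemma~\ref{lemma:lem0} with a Laplace approximation, and observe that the $\tfrac{1}{2}\log n$ contributions from the rescaled entropy and the normalizer cancel, leaving terms controlled by Assumptions~\ref{assume:2}, \ref{assume:3} and \ref{assume:5}(3). Your explicit contradiction argument for the ``rate $\sqrt{n}$'' clause is in fact more detailed than what the paper provides, and your closing remarks correctly flag the same compactness/uniformity caveats in the LAN step that the paper itself glosses over.
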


\begin{proof}
Following Assumption~\ref{assume:5} there exists a  sequence  of distributions $\{q_n(\theta)\} \in \cQ$ that converges to $\delta_{\theta_0}$ at the rate of $\gamma_n=\sqrt{n}$.
Specifically, we consider the sequence where $q_n(\theta)$ has mean $\hat \theta_n$, the maximum likelihood estimate.
It suffices to show that for such  sequence $\{q_n(\theta)\} \subset \cQ$,
	\[\limsup_{n \to \infty} \scKL  \left(q_n(\theta) \bigg\| \frac{G(a,\theta) \pi(\theta)
		p(\nX|\mathbf \theta)} {\int_{\Theta} G(a,\theta) \pi(\theta) p(
		\nX |\theta) d\theta}\right) = \scKL  \left(q_n(\theta) \bigg \| \frac{G(a,\theta) \pi(\theta)
        \left(\frac{p(\nX|\theta)}{p(\nX|\theta_0)}\right)} {\int_{\Theta} G(a,\theta) \pi(\theta) \left(\frac{p(\nX|\theta)}{p(\nX|\theta_0)}\right) d\theta} \right) < \infty. \]

	For brevity let us denote $\scKL  \bigg(q_n(\theta) \bigg \| \frac{G(a,\theta) \pi(\theta)
		\left(\frac{p(\nX|\theta)}{p(\nX|\theta_0)}\right)} {\int_{\Theta} G(a,\theta) \pi(\theta) \left(\frac{p(\nX|\theta)}{p(\nX|\theta_0)}\right) d\theta} \bigg)$ as \scKL. 
        First, observe that for a compact set $K\subset \Theta$ containing the  true parameter $\theta_0$,  we have 
	\begin{align} 
	\nonumber
	\scKL 
	&= \int_{\Theta} q_n(\theta) \log (q_n(\theta)) d\theta -
	\int_{\Theta} q_n(\theta) \log ( G(a,\theta)
	\pi(\theta) ) d\theta - \int_{K} q_n(\theta)
	\log \left(\frac{p(\nX|\theta)}{p(\nX|\theta_0)}\right) d\theta
	\\
	\label{eq:eq0}
	& \quad \quad  - \int_{\Theta \backslash K} q_n(\theta)
	\log \left(\frac{p(\nX|\theta)}{p(\nX|\theta_0)}\right) d\theta + \log \left( \int_{\Theta} G(a,\theta) \pi(\theta) \left(\frac{p(\nX|\theta)}{p(\nX|\theta_0)}\right) d\theta \right).
	\end{align}
	
	Now we approximate $\int_{K} q_n(\theta) \log \left(\frac{p(\nX|\theta)}{p(\nX|\theta_0)}\right) d\theta$ using the LAN condition in Assumption~\ref{assume:4}. Let
    $\Delta_{n,\theta_0} := \sqrt{n}(\hat \theta_n - \theta_0)$, and reparameterizing the expression with $\theta =
    \theta_0 + n^{-1/2} h$ and denoting $K'$ as the reparameterized set $K$ we have 
	\begin{align}
	\nonumber
	\int_{K} & q_n(\theta) \log \left(\frac{p(\nX|\theta)}{p(\nX|\theta_0)}\right) d\theta  \\
    & = n^{-1/2} \int_{K'} q_n(\theta_0 + n^{-1/2}
	h) \log \left( \frac{ p(\nX|\theta_0 + n^{-1/2} h)}{ p(\nX|\theta_0 ) } \right) dh \\
	\nonumber
	&= n^{-1/2} \int_{K'} q_n(\theta_0 + n^{-1/2} h)
	\bigg ( h
	I(\theta_0)\Delta_{n,\theta_0} -	\frac{1}{2}h^2I(\theta_0)+
	o_{P^n_{0}}(1) \bigg) dh \\
	\nonumber
	&= \left( o_{P^n_{0}}(1) \right) \int_{K} q_n(\theta)  d\theta + \int_{K} q_n(\theta) \bigg (
	\sqrt{n}(\theta - \theta_0) I(\theta_0)\Delta_{n,\theta_0} -
	\frac{1}{2} n (\theta - \theta_0)^2 I(\theta_0) \bigg) d\theta  \\
	\label{eq:eq01}
	&= \left(  	\frac{1}{2} n I(\theta_0) (\hat \theta_n - \theta_0)^2 + o_{P^n_{0}}(1) \right) \int_{K} q_n(\theta)  d\theta  - \int_{K}	\frac{1}{2}  n I(\theta_0)q_n(\theta)  (\theta - \hat \theta_n)^2  d\theta.
	\end{align}

Now consider the last term in~\eqref{eq:eq0}. Let $\{ K_n \} \subseteq \Theta$ be a compact sequence of balls  such that for all $n\geq 1$, $\theta_0 \in K_n$ and  $K_n \to \Theta$ as $n \to \infty$.	
Next, using the same re-parametrization 
we obtain, \
\begin{align}
\int_{K_n} G(a,\theta) \pi(\theta) \left(\frac{p(\nX|\theta)}{p(\nX|\theta_0)}\right) d\theta &= e^{o_{P^n_{0}}(1)} e^{\frac{nI(\theta_0)}{2}(\hat \theta_n-\theta_0)^2 } \  \int_{K_n} G(a,\theta) \pi(\theta) e^{-\frac{nI(\theta_0)}{2}(\theta- \hat \theta_n)^2 } d\theta.
\label{eq:eq02a}
\end{align}	
Now, Lemma~\ref{lemma:lem0} implies that
\begin{align} 
\int_{\Theta \backslash K_n} G(a,\theta) \pi(\theta) \left(\frac{p(\nX|\theta)}{p(\nX|\theta_0)}\right) d\theta  = o_{P^n_{0}}(1).
\label{eq:eq03}
\end{align}
Hence, by the results in~\eqref{eq:eq02a} and~\eqref{eq:eq03}, the last term in~\eqref{eq:eq0} satisfies
\begin{align}
\nonumber
	\log& \left( \int_{\Theta} G(a,\theta) \pi(\theta) \left(\frac{p(\nX|\theta)}{p(\nX|\theta_0)}\right) d\theta \right) 
    \\
    \nonumber
    =& \log \left( \int_{K_n} G(a,\theta) \pi(\theta) \left(\frac{p(\nX|\theta)}{p(\nX|\theta_0)}\right) d\theta + \int_{\Theta \backslash K_n} G(a,\theta) \pi(\theta) \left(\frac{p(\nX|\theta)}{p(\nX|\theta_0)}\right) d\theta\right)  
    \\
    \sim & \frac{nI(\theta_0)}{2}(\hat \theta_n-\theta_0) + \log \int_{K_n} G(a,\theta) \pi(\theta) e^{-\frac{nI(\theta_0)}{2}(\theta- \hat \theta_n)^2 } d\theta  + o_{P^n_{0}}(1),
	\label{eq:eq02}
\end{align}  
	where $a_n \sim b_n$ implies that  $\lim_{n \to \infty} \frac{a_n}{b_n}=1$.
	Now, by substituting \eqref{eq:eq01} and \eqref{eq:eq02} into \eqref{eq:eq0} we obtain,
	\begin{align*}
	\scKL  \sim \int_{\Theta} &q_n(\theta) \log q_n(\theta) d\theta - \int_{\Theta} q_n(\theta) \log ( G(a,\theta) \pi(\theta) ) d\theta + \left(  	\frac{1}{2} n I(\theta_0) (\hat \theta_n - \theta_0)^2 + o_{P^n_{0}}(1) \right) \left[1- \int_{K} q_n(\theta)  d\theta \right] \\ 
	& + \log \int_{K_n} G(a,\theta) \pi(\theta) e^{-\frac{nI(\theta_0)}{2}(\theta- \hat \theta_n)^2 } d\theta  + \frac{1}{2} n  I(\theta_0) \int_{K} ( \theta- \hat \theta_n )^2	q_n(\theta)  d\theta  + o_{P^n_{0}}(1).
	\end{align*}

Since, the $q_n(\theta) \Rightarrow $  $\delta_{\theta_0}$ as $n \to \infty$ and $\theta_0 \in K$, 
\[ \left(  	\frac{1}{2} n I(\theta_0) (\hat \theta_n - \theta_0)^2 + o_{P^n_{0}}(1) \right) \left[1- \int_{K} q_n(\theta)  d\theta \right] \sim o_{P^n_{0}}(1) ,\]
implying that,
\begin{align}
	\nonumber
	\scKL \sim \int_{\Theta} q_n(\theta) \log q_n(\theta) d\theta &- \int_{\Theta}    \log ( G(a,\theta) \pi(\theta) ) q_n(\theta) d\theta - \frac{1}{2} \log n + \frac{1}{2} \log \left( \frac{2\pi} { I(\theta_0)} \right) \\ 
	  +  \log \int_{K_n} (G(a,\theta)\pi(\theta))& \mathcal{N}(\theta;\hat \theta_n,(nI(\theta_0))^{-1}) d\theta  + \frac{1}{2}  n  I(\theta_0) \int_{K} (\theta- \hat \theta_n)^2	q_n(\theta)  d\theta   + o_{P^n_{0}}(1),
	\label{eq:eq04}
	\end{align}
where $\mathcal{N}(\theta;\hat \theta_n,(nI(\theta_0))^{-1})$ represents the Gaussian density function.	
	Since $q_n(\theta)$ has mean $\hat \theta_n$ and rate of convergence $\sqrt{n}$, then by a change of variable to $\mu = \sqrt{n} (\theta - \hat \theta_n)$
	\begin{align}
		\int_{\Theta} &q_n(\theta) \log q_n(\theta) d\theta = \frac{1}{\sqrt{n}}  \int q_n\left(\frac{\mu}{\sqrt{n} }+ \hat \theta_n \right) \log q_n\left(\frac{\mu}{\sqrt{n} }+ \hat \theta_n \right)  d \mu = \frac{1}{2} \log n +   \int \check q_{n}(\mu) \log \check q_{n}(\mu)  d \mu,
		\label{eq:eq05}
	\end{align}
    where $\check q_{n}(\mu)$ is the  rescaled density as defined in Definition~\ref{def:rescale}. Substituting~\eqref{eq:eq05} into~\eqref{eq:eq04}, we obtain
\begin{align}
\nonumber
	\scKL \sim  \int &\check q_{n}(\mu) \log \check q_{n}(\mu)  d \mu - \int_{\Theta}    \log ( G(a,\theta) \pi(\theta) ) q_n(\theta) d\theta + \frac{1}{2} \log \left( \frac{2\pi} {I(\theta_0)} \right) \\ 
	&  +  \log \int_{K_n} (G(a,\theta)\pi(\theta)) \mathcal{N}(\theta;\hat \theta_n,(nI(\theta_0))^{-1}) d\theta  + \frac{1}{2}  n  I(\theta_0) \int_{K} (\theta- \hat \theta_n)^2	q_n(\theta)  d\theta   + o_{P^n_{0}}(1)
    \label{eq:L1}
  \end{align}
 Since, $\frac{1}{2}  n  I(\theta_0) \int_{K} (\theta- \hat \theta_n)^2	q_n(\theta)  d\theta  \leq \frac{1}{2}  n  I(\theta_0) \int_{\Theta} (\theta- \hat \theta_n)^2	q_n(\theta)  d\theta  \leq \frac{1}{2}I(\theta_0)$, due to the specific choice of $q_n(\theta)$ with variance $O(n^{-1})$(see Definition~\ref{def:roc}), it follows from~\eqref{eq:L1} that for large $n$,
  \begin{align}
    \nonumber
    \scKL \lesssim \int &\check q_{n}(\mu) \log \check q_{n}(\mu)  d \mu - \int_{\Theta}    \log (  \pi(\theta) ) q_n(\theta) d\theta - \int_{\Theta}    \log ( G(a,\theta) ) q_n(\theta) d\theta + \frac{1}{2} \log \left( \frac{2\pi} {I(\theta_0)} \right) 
    \\ 
    &  +  \log \int_{K_n} (G(a,\theta)\pi(\theta)) \mathcal{N}(\theta;\hat \theta_n,(nI(\theta_0))^{-1}) d\theta  + \frac{1}{2} I(\theta_0) + o_{P^n_{0}}(1)
    \label{eq:L2}
	\end{align}
	 Now take limsup on either side of the above equation. Observe that the first term is  finite by Assumption~\ref{assume:5}. The second term is finite since the prior distribution is bounded due to Assumption~\ref{assume:2}. For the third term, since $\log  x \leq x,$
     \[ \int_{\Theta}    \log ( G(a,\theta) ) q_n(\theta) d\theta \leq \int_{\Theta}    ( G(a,\theta) ) q_n(\theta) d\theta , \] and the RHS above is  bounded by Assumption~\ref{assume:3}. Since, $\theta_0\in K_n \forall  n\geq 1$, the fifth term is bounded by Laplace's approximation,
     \[\int_{K_n} (G(a,\theta)\pi(\theta)) \mathcal{N}(\theta;\hat \theta_n,(nI(\theta_0))^{-1}) d\theta  \sim G(a,\theta_0)\pi(\theta_0). \] Since the remaining terms are finite it follows that,
     \begin{align*}
     \limsup_{n \to \infty}  \min_{q \in \cQ} \scKL  \left(q(\theta) \bigg\| \frac{G(a,\theta) \pi(\theta)
         p(\nX|\mathbf \theta)} {\int_{\Theta} G(a,\theta) \pi(\theta) p(
         \nX |\theta) d\theta}\right) < \infty . 
     \end{align*}
%
	\end{proof}

    \begin{proof}[
        {Proof of Proposition~\ref{prop:3}}]
       Recall from the Lemma~\ref{lem:MS} that for any risk function $G(a,\theta)$ that satisfies Assumption~\ref{assume:3} and for a given sequence of distributions $\{q_n(\theta)\}$ that converges weakly to any distribution $q(\theta)$ other than $ \delta_{\theta_0}$, $\scKL  \left(q_n(\theta) \bigg\| \frac{G(a,\theta) \pi(\theta)
           p(\nX|\mathbf \theta)} {\int_{\Theta} G(a,\theta) \pi(\theta) p(
           \nX |\theta) d\theta}\right)$ diverges as $n \to \infty~P_0 -a.s$. On the other hand, Lemma~\ref{lemma:lem1} shows that for any $a \in \sA$, 
       \begin{align}
       \nonumber
      & \limsup_{n \to \infty}  \min_{q \in \cQ} \scKL  \left(q(\theta) \bigg\| \frac{G(a,\theta) \pi(\theta)
           p(\nX|\mathbf \theta)} {\int_{\Theta} G(a,\theta) \pi(\theta) p(
           \nX |\theta) d\theta}\right) 
       \\
       &= \limsup_{n \to \infty}  \scKL  \left(\qlc \bigg\| \frac{G(a,\theta) \pi(\theta)
           p(\nX|\mathbf \theta)} {\int_{\Theta} G(a,\theta) \pi(\theta) p(
           \nX |\theta) d\theta}\right) < \infty. 
       \end{align}
Therefore, Lemma~\ref{lem:MS} and~\ref{lemma:lem1}  combined together imply that for any $a \in \sA$, and for any risk function $G(a,\theta)$ that satisfies Assumption~\ref{assume:3}, the LC approximate posterior must converge weakly to $\delta_{\theta_0}$ as  $n\to \infty~P_0-a.s$; that is 
\(\qlc \Rightarrow \delta_{\theta_0}~P_0-a.s.  \text{ as } n \to \infty.\)
        \end{proof}

\section{Proof of Proposition~\ref{prop:1}}
\begin{proof}
Fix $a \in \sA$. Observe that for any $\eta>0$.
\begin{align} 
\nonumber
&\left|\int_{\Theta}G(a,\theta) \qnv d\theta - G(a,\theta_0)\right|  
\\
\nonumber
&\leq \int_{\Theta}|G(a,\theta)- G(a,\theta_0)| \qnv d\theta. 
\\
& = \int_{\|\theta-\theta_0\|>\eta}|G(a,\theta)- G(a,\theta_0)| \qnv d\theta +  \int_{\|\theta-\theta_0\|\leq \eta}|G(a,\theta)- G(a,\theta_0)| \qnv d\theta.
\label{eq:eq1}
\end{align} 
 Next, recall the  fact that NV approximate  posterior is consistent from \cite{WaBl2017}, that is for every $\eta>0$
\begin{align} \lim_{n \to \infty} \int_{\|\theta-\theta_0\|>\eta} \qnv d\theta  = 0 ~ P_{0}-a.s. 
\label{eq:eq2}
\end{align} 

Using the above result and the monotone convergence theorem, the first term in~\eqref{eq:eq1} converges to zero.

By Assumption~\ref{assume:3} $G(a,\theta)$ is
locally Lipschitz continuous in $\theta$.
where $L_{\eta}(a)$ is the Lipschitz constant for the  compact  set $\{ \theta\in \Theta: \|\theta-\theta_0\|  \leq \eta  \}$. Using this fact in the  second term of~\eqref{eq:eq1}, we obtain
\begin{align} 
\nonumber
\int_{\|\theta-\theta_0\|\leq \eta}|G(a,\theta)- G(a,\theta_0)| \qnv d\theta   &\leq L_{\eta}(a)\int_{\|\theta-\theta_0\|\leq \eta}\|\theta- \theta_0\| \qnv d\theta
\\
& \leq L_{\eta}(a)\eta\int_{\|\theta-\theta_0\|\leq \eta} \qnv d\theta.
\label{eq:eq3}
\end{align} 
Now taking limit on either side of~\eqref{eq:eq3}, and  Using~\eqref{eq:eq2} again, the second term in~\eqref{eq:eq3} tends to $L_{\eta}(a)\eta$, but since $\eta$ can be arbitrarily small (and $L_{\eta}(a) \to 0 $ as $\eta \to 0$  ), we obtain
\[ \lim_{n \to \infty} \left|\int_{\Theta}G(a,\theta) \qnv d\theta - G(a,\theta_0)\right|   = 0 ~P_{0}-a.s~\forall a \in \sA.   \] 
It follows straightforwardly that $H_{q^*}(a):=
E_{\qnv}[G(a,\theta)]$ converges to $H_0(a)$ 
$P_{0}$-a.s. for any $a\in \mathcal{A}$. That is, 
\begin{align}
\nonumber
&\forall \ a \in \ \mathcal{A}, \ P_0\left(\left\{\omega \in \Omega: \underset{n \rightarrow \infty}{\lim} \left|H_{q^*(\theta|\nX(\omega))}(a)-H_0(a)\right|=0\right\}\right)= 1, ~\text{and}\\
\label{eq:prop12}
& \forall \ a \in \ \mathcal{A}, \ P_0\left(\left\{\omega \in \Omega: \forall \epsilon >0 \ \exists n\geq n(\omega,a,\epsilon) : \left|H_{q^*(\theta|\nX(\omega))}(a)-H_0(a)\right|<\epsilon\right\}\right)= 1.
\end{align}

Next, define $E:= \{ a: a \in Q \cap \mathcal{A}\}$, where $Q$ is set
of rationals on $\mathbb R$. Thus $E$ is a countable set containing
the rational numbers (say $\{a_i\}$) in $\mathcal{A}$. Define a set
$A_i=\{\omega \in \Omega: \forall \epsilon >0 \ \exists n \geq
n(\omega,a_i, \epsilon) : |H_{q^*(\theta|\nX(\omega))}(a_i)-H_0(a_i)|> \epsilon\}$. From~\eqref{eq:prop12} we know that $P_0(A_i)=0$. We also know that countable unions of sets of probability measure 0 are also of probability measure 0; that is, $P_0\left(\bigcup_{i\geq 1} A_i\right)\leq \sum_{i\geq 1} P_0\left(A_i \right)=0$. By definition of $A_i$, it follows that 
\begin{align*}
 \bigcap_{i\geq 1} A_i^C&= \left\{
\begin{array}{l}\omega \in \Omega: \forall
\epsilon > 0~\exists n\geq n(\omega,
\epsilon) \geq \text{max}_{i \geq 1}
\{n(\omega,a_i, \epsilon)\},\\
~\text{s.t.}~
|H_{q^*(\theta|\nX(\omega))}(a_i)-H_0(a_i)|< \epsilon, \
\forall \ \{a_i\}_{i \geq 1} \end{array}\right\}.
\end{align*}
This establishes the uniform convergence $\sup_{a_i \in
    E}|H_{q^*(\theta|\nX(\omega))}(a_i)-H_0(a_i)| \rightarrow 0$ $P_0-$a.s. 

By the continuity of $G(a,\theta)$ in Assumption~\ref{assume:3}, $H_{q^*}(a)$ and $H_0(a)$ are
continuous over the compact set $\mathcal{A}$, and by the Heine-Cantor
theorem, $H_{q^*}(a)$ and $H_0(a)$ are uniformly continuous on
the set $\mathcal{A}$. Since the set $E$ is dense in the compact set
$\mathcal {A}$, we can find a sequence $\{a_n\} \in E$ that converges to a
point $a \in \mathcal {A}$. Then, it follows that
$$|H_{q^*}(a)-H_0(a)| \leq |H_{q^*}(a)-H_{q^*}(a_n)|+|H_{q^*}(a_n)-H_0(a_n)|+|H_0(a_n)-H_0(a)|.$$

From the above inequality, we can conclude that $\sup_{a \in
    \sA}|H_{q^*}(a)-H_0(a)| \rightarrow 0$ $P_0-$a.s., by using
the uniform continuity of $H_{q^*}(a)$ and $H_0(a)$ on set $\mathcal{A}$
and the uniform convergence of $\sup_{a_n \in E}|H_q(a_n,X)-H_0(a_n)|$ to 0$~P_0-$a.s, thus completing the proof.
\end{proof}

\section{Proof of Corollary~\ref{cor:1}}
\begin{proof}
Let $a_q \in \anv(\nX)$ and $a_0 \in A^*$ then, by definition, $V_{q^*} = H_{q^*}(a_q)$ and $V_0 = H_0(a_0)$. Then, 
\begin{align}
V_{q^*}-V_0 &= [H_{q^*}(a_q)-H_0(a_0) ] \leq [H_{q^*}(a_0)-H_0(a_0)] \leq \underset{a \in \mathcal{A}}{\sup}   |H_{q^*}(a)-H_0(a)| \label{eq:16}.
\end{align}
On the other hand, observe that
\begin{align}
V_{q^*}-V_0 & \geq [H_{q^*}(a_q)-H_0(a_q)] \geq - |H_{q^*}(a_q)-H_0(a_q)| \geq - \underset{a \in \mathcal{A}}{\sup} |H_{q^*}(a)-H_0(a)|.\label{eq:17}
\end{align} 

Therefore from~\eqref{eq:16},~\eqref{eq:17}, and
Proposition~\ref{prop:1}, it follows that
\begin{align*}
\lim_{n \to \infty} |V_{q^*}-V_0| \leq \lim_{n \to \infty} \underset{a \in \mathcal{A}}{\sup}
|H_q(a,X)-H_0(a)| = 0~P_0-a.s,
\end{align*}
and the result follows.
\end{proof}
\section{Proof of Proposition~\ref{prop:2}}
\begin{proof}
Equivalently, we can show that $a \in \mathcal{A}\setminus A^*$ implies that
$a \not\in \anv(\nX)~P_0-$a.s. as $n\to\infty$. Fix $a \in \mathcal{A}\setminus A^*$, then
we have $H_0(a) > V_0$. Next define $\epsilon:=\underset{a \in
    \mathcal{A}\setminus A^*}{\inf} H_0(a) - V_0 $.  Using Proposition~\ref{prop:1}, there exists an $ n_0\geq 1$
such that $\forall n \geq n_0$, $|V_{q^*}-V_0| \leq \underset{a \in
    \mathcal{A}}{\sup} |H_{q^*}(a)-H_0(a)| < \frac{\epsilon}{2}$
$P_0-a.s.$ Therefore we have, $ V_{q^*} < V_0 + \frac{\epsilon}{2}$ for all $n\geq n_0$. We also know that for any $a \in \mathcal{A}\setminus A^*$  and $n \geq n_0$
\begin{align*}
\epsilon+ V_0 &=\underset{a \in \mathcal{A}\setminus A^*}{\inf} H_0(a)
\leq  H_0(a) < H_{q^*}(a) + \frac{\epsilon}{2},
\end{align*}
which implies that $H_{q^*}(a) > \frac{\epsilon}{2} + V_0$. Therefore for any $a\in \mathcal{A}\setminus A^*$ , $H_{q^*}(a) > V_0 +
\frac{\epsilon}{2}> V_{q^*}$ for all $n \geq n_0$. This implies that $a \not\in
\anv(\nX)$ for all $n \geq n_0~P_0-$a.s. and hence the proposition follows.
\end{proof}

\section{Proof of Proposition~\ref{prop:6}}
\begin{proof}
Fix  $\bar a \in \mathcal{A}$ and recall from~\eqref{eq:reg-form} that \[\mathcal F(a,q; \nX) = -\scKL (q(\theta)\| \pi(\theta|\nX)) + \int_{\Theta} \log G(a,\theta) q(\theta)  d\theta.\] Also recall that the LC approximate posterior $\qlcb$ converges weakly to a Dirac delta distribution at $\theta_0$ due to Propostion~\ref{prop:3}. 
It now follows that, due to Assumption~\ref{assume:3}~(2) on $G(a,\theta)$ and application of dominated convergence theorem
 \begin{align}
  \lim_{n \to \infty} \int_{\Theta} \log G(a,\theta) \qlcb  d\theta  = \log G(a,\theta_0) \ \forall a \in \sA \ P_0-a.s. 
  \label{eq:eq6}
  \end{align}

Now since the set $\sA$ is compact, logarithm function is continuous, and $G(a,\theta)$ is continuous in $\forall a\in \sA$, it follows using similar arguments as used in~Proposition~\ref{prop:1} that for any $\bar a \in \sA$,
\begin{align}
    \sup_{a \in \sA} \left| \int_{\Theta} \log G(a,\theta) \qlcb  d\theta  - \log G(a,\theta_0)   \right| \to 0 \ P_0-a.s.
\end{align}
Now again using similar arguments as in Proposition~\ref{prop:2} and monotonicity of logarithm function, we can show that the LC approximate decision rule for any $\bar a \in \sA$, that is $$\alc(\nX,{\bar a}) : = \argmin_{a \in \sA} \int_{\Theta} \log G(a,\theta) \qlcb  d\theta $$ is subset of the true decision set $A^*$ $P_0-a.s.$ as $n \to \infty$. Since the result is true for any $\bar a \in \sA$, it is true for any $a$ that lies in LC approximate decision set $\alc$ and therefore the proposition follows. 
\end{proof}

\end{document}